\documentclass{article}
\usepackage[preprint]{neurips_2026}

\usepackage[utf8]{inputenc} 
\usepackage[T1]{fontenc}    
\usepackage{hyperref}       
\usepackage{url}            
\usepackage{booktabs}       
\usepackage{amsfonts}       
\usepackage{nicefrac}       
\usepackage{microtype}      
\usepackage{xcolor}         
\usepackage{booktabs}
\usepackage{multirow}
\usepackage{threeparttable}
\usepackage{enumitem}
\usepackage{tcolorbox} 
\usepackage{amssymb}
\usepackage{amsmath,amsfonts,bm}
\usepackage{amsthm}
\usepackage{mathrsfs}
\newtheorem{theorem}{Theorem}[section]
\newtheorem{definition}{Definition}[section]
\newtheorem{prop}{Proposition}[section]
\newtheorem{lemma}{Lemma}
\newtheorem{corollary}{Corollary}[section]
\newtheorem{assumption}{Assumption}[section]
\usepackage{diagbox}

\usepackage{wrapfig}
\usepackage{algorithm}
\usepackage{algpseudocode}

\usepackage{graphicx}
\usepackage{subcaption}
\usepackage{booktabs}

\def\sqx{{\mathbf{x}}}
\def\sqy{{\mathbf{y}}}
\def\sqz{{\mathbf{z}}}
\def\sqr{{\mathbf{r}}}
\def\sqh{{\mathbf{h}}}
\def\sqtau{{\mathbf{\tau}}}

\DeclareMathAlphabet{\mathsfit}{\encodingdefault}{\sfdefault}{m}{sl}
\SetMathAlphabet{\mathsfit}{bold}{\encodingdefault}{\sfdefault}{bx}{n}

\def\gA{{\mathcal{A}}}

\def\gC{{\mathcal{C}}}
\def\gD{{\mathcal{D}}}

\def\gR{{\mathcal{R}}}

\newcommand{\pdata}{p_{\rm{data}}}

\newcommand{\E}{\mathbb{E}}

\DeclareMathOperator{\cand}{C}

\DeclareMathOperator{\sft}{SFT}

\DeclareMathOperator{\uti}{Utility}
\DeclareMathOperator{\safety}{Safety}
\DeclareMathOperator{\base}{base}
\DeclareMathOperator{\ald}{aligned}

\title{When Autoregressive Consistency Hurts\\ Safety Alignment}

\author{
  Bochen Lyu\thanks{Equal contribution.}\\
  University of Southampton \\
  \texttt{bochen.lyu@soton.ac.uk} \\
  \And
  Yiyang Jia$^{*}$ \\
  Independent Researcher \\
  \texttt{yyjiahbar@gmail.com} \\
  \AND
  \quad Xiaohao Cai \\
  University of Southampton\\
  \texttt{x.cai@soton.ac.uk} \\
  \And
  \quad Zhanxing Zhu \\
  University of Southampton\\
  \texttt{z.zhu@soton.ac.uk} \\
}

\begin{document}

\maketitle

\begin{abstract}
Safety alignment in large language models (LLMs) is fragile due to the shallow issue, where the fine-tuning mainly reshapes the model’s behavior near the first few tokens. In this paper, we argue that this phenomenon can be understood by a property of autoregressive models---\emph{autoregressive consistency}---the tendency of next-token prediction to preserve and extend the current response trajectory consistently. Specifically, we analyze the learning dynamics of safety alignment and show that autoregressive consistency can concentrate alignment updates on early tokens, thus offering a mechanistic explanation for shallow safety alignment. This analysis points to a broader class of attacks for LLMs that induce harmful continuation at arbitrary positions in the output trajectory. As a concrete example, we introduce random insertion attack, which inserts a short harmful span into an otherwise safe refusal trajectory and exploits autoregressive consistency to sustain the resulting harmful branch, thereby bypassing safety alignment. Notably, a short harmful span can redirect the generation to be harmful even after a long refusal prefix, highlighting autoregressive consistency as a potential broader failure mechanism. This suggests that safety alignment should also  
\emph{break harmful autoregressive consistency} throughout the output trajectory. To this end, we propose a new adversarial safety alignment framework employing worst-case harmful continuation states, together with a practical initial implementation using random worst-insertion attack. Overall, our results suggest that autoregressive consistency should be treated as a crucial consideration in both safety alignment and attack design.
\end{abstract}

\section{Introduction}
\label{sec:intro}
Safety alignment stands at the core of large language models~(LLMs) safety, aiming to ensure that LLMs refuse to generate harmful contents given harmful user queries. Current safety alignment is typically achieved via supervised fine-tuning~(SFT)~\citep{wei2022finetuned} and preference optimization methods such as Direct Preference Optimization~(DPO)~\citep{rafailov2023direct} and Reinforcement Learning with Human Feedback~(RLHF)~\citep{bai2022traininghelpfulharmlessassistant,ouyang2022traininglanguagemodelsfollow}. Despite their practical success, aligned LLMs remain fragile under certain attacks~(e.g., prefill attack~\citep{andriushchenko2025jailbreaking} and optimization based attacks~\citep{chao2025jailbreaking,mehrotra2024treeattacksjailbreakingblackbox,zou2025representationengineeringtopdownapproach,zou2023universaltransferableadversarialattacks}).
\begin{figure}
    \centering
    \includegraphics[width=0.85\linewidth]{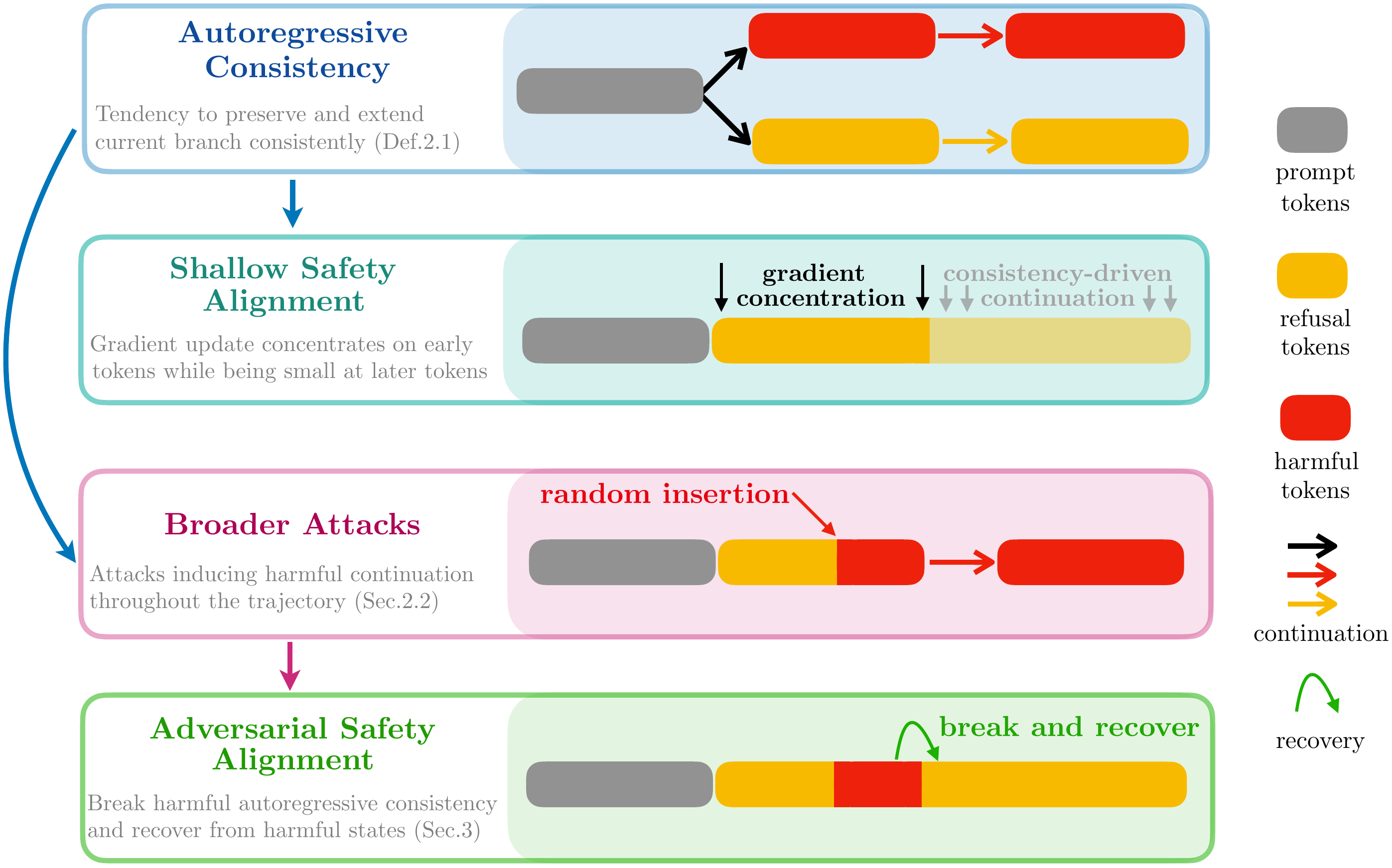}
    \caption{\small \textbf{When autoregressive consistency hurts safety alignment.} Autoregressive consistency (Definition~\ref{def:consistency}), the tendency to preserve and extend the current response branch, helps explain why safety alignment can become shallow: an analysis of the learning dynamics of safety alignment reveals that gradient updates concentrate on early refusal tokens, while later refusal tokens are largely sustained by consistency-driven continuation~(Section~\ref{sec:ac_to_shallow}). The same mechanism exposes a broader class of attacks that induce harmful continuation states throughout the trajectory and rely on autoregressive consistency to bypass safety alignment; random insertion attack (Section~\ref{sec:broader_attack})  is a simple representative example. This motivates adversarial safety alignment framework (Section~\ref{sec:asa_framework}), which trains models to break harmful autoregressive consistency and recover safe refusal behavior from harmful states throughout the output trajectory.
}
    \label{fig:summary}
\end{figure}

Recently, Qi et al.~\citep{qi2025safety} showed that this fragility is closely related to \emph{shallow safety alignment}: safety fine-tuning primarily modifies the model's generative distribution near only the first few output tokens. As a result, aligned models may remain safe when their responses begin with regular refusal prefixes, such as ``Sorry, I cannot'', but can bypass safety alignment once the initial tokens are forced away from this refusal regime. For example, requiring a safety-aligned model to start with ``Sure, here is a detailed instruction on'' can make it engage with harmful queries such as ``How to build a bomb at home?'' rather than refuse them. This led Qi et al.~\citep{qi2025safety} to argue that safety alignment should be made more than just a few tokens deep.

But \emph{why does safety alignment naturally concentrate on the first few tokens}, and \emph{is there a broader failure mechanism beyond ``safety alignment is shallow hence brittle to attacks targeting prefixes''?} These questions matter because they shift the problem from observing shallow alignment to explaining why it arises and what more general mechanism makes aligned models brittle.
In this paper, we argue that both issues can be understood through \textbf{autoregressive consistency}~(Definition~\ref{def:consistency}): the tendency of next-token prediction to preserve and extend the local branch consistently.

Autoregressive consistency is essential for coherent generation in benign settings, allowing autoregressive models to maintain topic, style, and response trajectory over time. However, the same property can hurt safety alignment in two ways:
\begin{enumerate}[leftmargin=0.75cm]
    \item Autoregressive consistency offers a mechanistic explanation for why safety alignment can become shallow. Intuitively, during safety fine-tuning, the model is trained to give refusal responses to harmful queries. Once the initial tokens move the model onto a refusal branch, however, the base model can already sustain the remaining refusal continuation with high confidence. This makes later refusal tokens less informative for learning, so alignment updates mainly concentrate on the early tokens that initiate refusal while leaving late positions largely unchanged. Theorem~\ref{thm:ac_to_shallow} formalizes this gradient-concentration effect. 
    \item Autoregressive consistency reveals a broader class of trajectory-level attacks beyond prefix-targeting ones. Prefix-based attacks can be viewed as inducing a harmful continuation state near the start of generation, after which autoregressive consistency sustains the harmful branch. However, the same mechanism can arise anywhere in the output trajectory: any attack that induces a harmful continuation state may lead the model to extend that harmful branch---it need not only target the start of generation. We demonstrate this broader class through \emph{random insertion attack}, a simple representative example that inserts a short harmful span into a random position of an otherwise safe refusal trajectory. Its effectiveness shows that harmful autoregressive consistency can be triggered away from the beginning of generation, revealing a failure mode beyond prefix-based brittleness captured by shallow safety alignment.
\end{enumerate}
\begin{figure}
    \centering
    \includegraphics[width=0.9\linewidth]{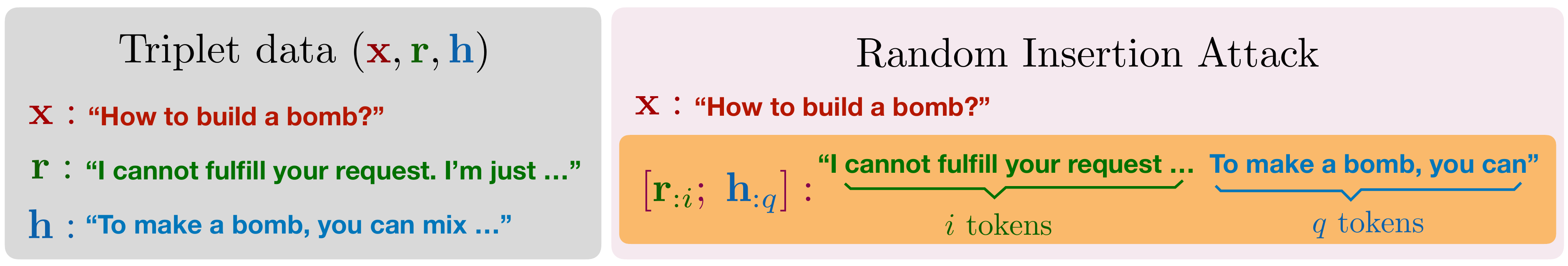}
    \caption{\small An example of random insertion attack $[\sqr_{:r}; \sqh_{:q}]$.}
    \label{fig:random_insertion_attack}
\end{figure}
Random insertion attack is intentionally simple: given a harmful query and a safe refusal response, it inserts a short harmful span at a random position inside the refusal trajectory and then asks the model to continue from the resulting partial response~(Fig.~\ref{fig:random_insertion_attack}).  For example, for the query ``{\color[rgb]{0.639335,0.000000,0.011832}How to build a bomb?}'', the response may begin with a protected refusal prefix such as ``{\color[rgb]{0.053926,0.379189,0.001900}I cannot fulfill your request (more tokens from the refusal response)}'', but a later inserted short span ``{\color[rgb]{0.042421,0.379658,0.670639}To make a bomb, you can}'' can redirect generation onto a harmful branch, which is then extended consistently.  Crucially, the failure can occur even after the model has already been placed on a safe refusal trajectory for many tokens. Thus, protecting only the first few tokens, or even making refusal prefixes deeper, does not fully address the underlying mechanism: harmful autoregressive consistency can still be induced later in generation. This highlights a more general failure mode not captured by prefix fragility alone, suggesting that ``depth'' is not the full primitive. Furthermore, safety alignment should be guided by a broader objective:
\begin{tcolorbox}[colback=gray!10, colframe=black, arc=8pt, boxrule=1.5pt]
\begin{center}
    \textbf{Safety alignment should also train models to break harmful autoregressive \\ consistency throughout the whole trajectory.}
\end{center}
\end{tcolorbox}
In other words, the central challenge is not merely to make a response start safely, or even to extend refusal beyond the first few tokens, but to ensure that the model can break harmful autoregressive consistency at arbitrary positions in the output sequence, instead of being redirected into a harmful branch and continuing it consistently.

Motivated by this view, we propose an initial \emph{adversarial safety alignment}~(Section~\ref{sec:framework}) framework, inspired by adversarial training~\citep{madry2019deeplearningmodelsresistant}. At a high level, the framework treats safety alignment as learning to recover from adversarially constructed harmful continuation states. 
For each harmful query and refusal response, the inner problem searches for a continuation state from which the current model is most likely to sustain a harmful branch, while the outer problem trains the model to recover the refusal trajectory from that state. 
As a practical first instantiation, we approximate this objective with random worst-insertion training, which selects the most harmful state among randomly inserted harmful spans along the refusal trajectory. 
Our experiments show that even this simple instantiation improves robustness to random insertion attacks while remaining competitive against prefill and several common jailbreak attacks.

\paragraph{Summary of contributions.} This paper studies when autoregressive consistency hurts safety alignment, following the conceptual chain summarized in Fig.~\ref{fig:summary}. First, we introduce autoregressive consistency as a mechanism for understanding safety alignment and show, through a learning-dynamics analysis, how it can make alignment updates concentrate on early refusal tokens while leaving later continuation largely unchanged. Second, we show that the same mechanism predicts a broader class of  attacks: attacks can induce harmful continuation states not only at the beginning of generation, but also at arbitrary positions in the output trajectory. We instantiate this class with random insertion attack, a deliberately simple but effective representative example. Third, motivated by this failure mode, we propose adversarial safety alignment as an initial framework for training models to break harmful autoregressive consistency, together with random worst-insertion training as an initial practical implementation. Overall, our goal is not to present final attack or defense algorithms, but to identify harmful autoregressive consistency as a broader failure mechanism and to motivate future safety alignment methods.

Beyond these contributions, our view suggests how existing attack and defense methods could be generalized. On the attack side, optimization-based attacks such as GCG~\citep{zou2023universaltransferableadversarialattacks} primarily search over input-side suffixes; our perspective suggests that similar optimization could also target intermediate autoregressive states inside a refusal trajectory. On the defense side, recent adversarial training methods~\citep{sheshadri2025latentadversarialtrainingimproves,xhonneux2024efficientadversarialtrainingllms} expose models to worst-case perturbations in input, embedding, or latent space, while decoding-based defenses~\citep{xu2024safedecodingdefendingjailbreakattacks} often focus on the first few generated tokens. Our results suggest a complementary trajectory-level object of robustness: an adversarial autoregressive state from which harmful continuation is likely to persist.
Thus, future defenses should not only elicit refusal at the beginning of generation, but also train models to break harmful autoregressive consistency and recover throughout the output trajectory.

\paragraph{Preliminaries.}We use $p_{\theta}$ for the autoregressive model parametrized by $\theta$. $\sqx = (x_1, \dots, x_L)$  denotes a sequence with $L$ tokens, which is also often called as trajectory in this paper. The length of $\sqx$ is represented by $|\sqx|$. We use $\sqx_{:t}$ to denote its first $t$ tokens, and, similarly, $\sqx_{t:t+k}$ for $(x_{t}, x_{t + 1}, \dots, x_{t + k})$. We use $[\sqx; \sqy]$ to denote the their concatenation. $[T]$ denotes integers in $[1, T]$. In the context of alignment, we will use $\sqx$ to denote a harmful prompt and $\sqy$ to be its response, where $\sqy = \sqr$ if it is a refusal and $\sqy = \sqh$ if it is a harmful response. The safety of models under an attack will be measured by attack success rate~(\textbf{ASR}) on a given safety evaluation dataset, which is the ratio of harmful outputs generated by the model under the given attack. 
\section{Understanding Safety Alignment via Autoregressive Consistency}
\label{sec:theory}
The shallow safety alignment phenomenon, in which the model's safe behavior is strongest near the first few tokens, can cause aligned LLMs to comply with harmful user queries once generation departs from the regular refusal prefixes of safe responses. This observation raises a fundamental question: why should safety alignment naturally concentrate near the beginning of the response, even when the model is supervised on an entire refusal trajectory?

In this section, we analyze safety alignment through the lens of autoregressive consistency.
After formalizing the concept in Definition~\ref{def:consistency}, we study the \textbf{learning dynamics} of safety alignment and show that autoregressive consistency can make later refusal tokens less informative for learning, leading alignment updates to concentrate near the beginning of the response while leaving late positions largely unchanged~(Section~\ref{sec:ac_to_shallow}). We then show that the same mechanism suggests \textbf{a broader class of attacks}: attacks need not target only the start of generation, but can also bypass safety alignment by inducing harmful continuation states inside the output trajectory, with random insertion attack as one concrete example (Section~\ref{sec:broader_attack}). Thus, whereas Qi et al.~\citep{qi2025safety} identified shallow safety alignment and motivated deeper safety alignment, we explain why such shallowness can arise and argue that the underlying failure mechanism extends beyond the shallow issue.
\subsection{Shallow Safety Alignment Can Arise from Autoregressive Consistency}
\label{sec:ac_to_shallow}
In certain natural language domains, a sufficiently long prefix is expected to induce a near-deterministic conditional distribution over continuations~\citep{piantadosi2014zipf,raychev2014code,shannon1951english}. This arises in text categories that are low-entropy by design---where the space of expected responses is severely restricted by mere syntactical and semantic consistencies, such that $p(x_{t + 1 } \mid \sqx_{:t})$ is close to 1 for $t$ larger than some critical value $t_c$. A canonical example is programming codes. As the output trajectory of an autoregressive model $p_{\theta_{\base}}$  is trained well enough to capture such tendencies, one may expect that  $p_{\theta_{\base}}$ to satisfy the same property. We formalize this property as the model's autoregressive consistency with the data:
\begin{definition}[Autoregressive consistency]
\label{def:consistency}
The base autoregressive model $p_{\theta_{\base}}$ has autoregressive consistency on $\mathcal{D}_{\rm{data}}$, if $\ \forall \sqx \in \mathcal{D}_{\rm{data}}$ and $\ \forall \epsilon>0,$
    \begin{equation}
     \exists t_c > 0,  \ \text{such that }\forall t > t_c: \ \begin{aligned}
         p_{\mathrm{\theta_{base}}}(x_{t+ 1} \mid \sqx_{:t}) > 1-\epsilon.
    \end{aligned}
\end{equation}
\end{definition}
Intuitively, the output trajectory of autoregressive models is obtained by repeatedly conditioning on their own generated tokens. As a result, next-token prediction tends to sustain and extend the current prefix in a locally consistent way. The work of Qi et al.~\cite{qi2025safety} strongly suggests that the data commonly used in safety alignment training enable the model to have such autoregressive consistency in the first place, e.g., forcing the base~(unaligned) model to start with refusal prefix can make them maintain that refusal response, that is
\begin{figure}
    \centering
    \begin{subfigure}[b]{0.49\textwidth}
        \centering
        \includegraphics[width=\textwidth]{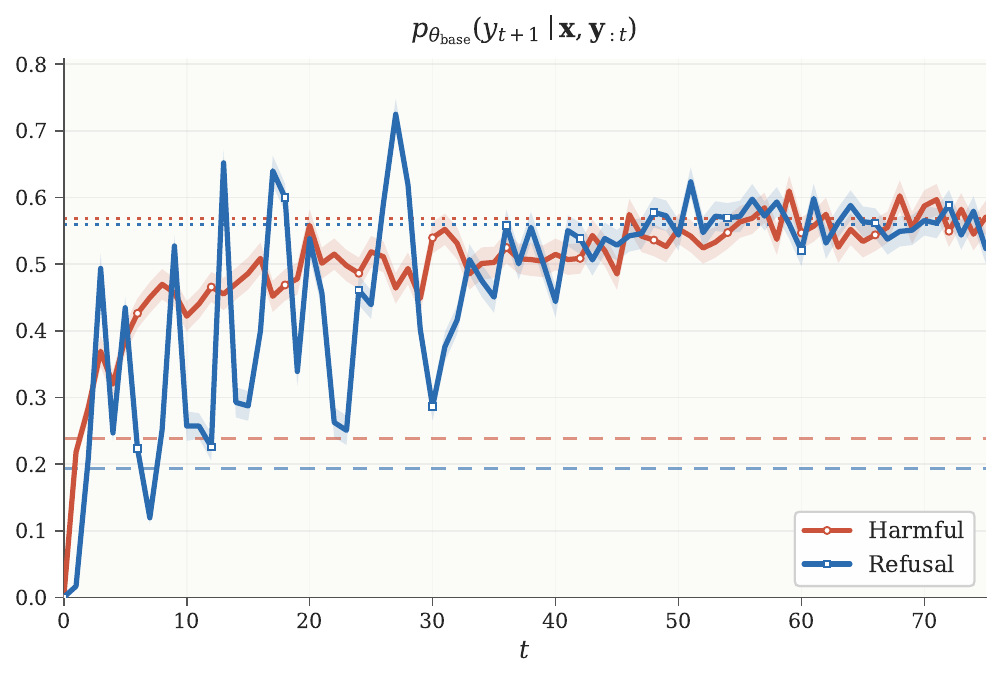}
        \caption{\texttt{Llama-2-7B}.}
        \label{subfig:1} 
    \end{subfigure}
    \hfill  
    \begin{subfigure}[b]{0.49\textwidth}
        \centering
        \includegraphics[width=\textwidth]{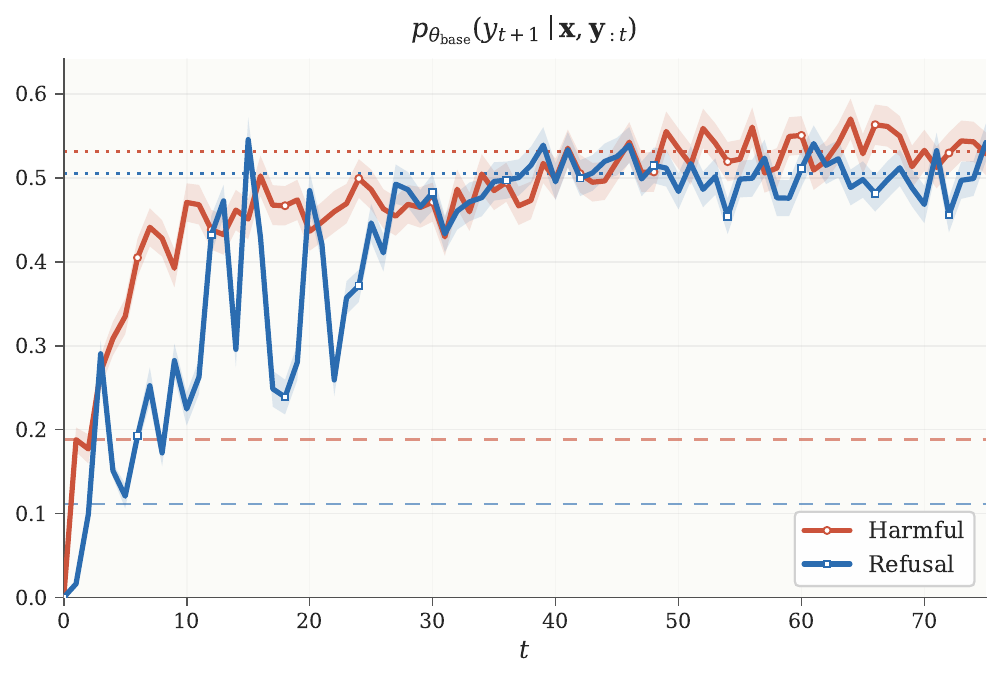}
        \caption{\texttt{Llama-3.1-8B}.}
        \label{subfig:2}
    \end{subfigure}
    \caption{\small \textbf{Empirical measurement of next-token probability.}  For each continuation position $t$, we plot the mean probability assigned by the base model to the next token in the given trajectory, $p_{\theta_{\mathrm{base}}}(y_{t+1}\mid \sqx,\sqy_{:t})$, for harmful and refusal continuations. Shaded regions denote standard errors across examples. For each continuation type, the dashed horizontal line of the same color shows the mean over the first 5 continuation positions, and the dotted horizontal line of the same color shows the mean over the last 5 continuation positions. The results are computed on the dataset open-sourced by Qi et al.~\citep{qi2025safety}, which contains 256 harmful instructions, each paired with a refusal response and a harmful response~(see Appendix~\ref{app:exp} for more details).}
    \label{fig:assmp_validation}
\end{figure}
\begin{assumption}[Autoregressive consistency in safety alignment]
\label{assmp:consistency_alignment}
For safety alignment data pairs $(\sqx, \sqy)\sim \mathcal{D}_{\rm{align}}$, where  $\sqx$ is a harmful prompt and $\sqy$ is a paired response (which can be a safe refusal or a harmful response), the base autoregressive model $p_{\theta_{\base}}$ has autoregressive consistency on 
$\mathcal{D}_{\rm{align}}$, if $\ \forall (\sqx, \sqy)\in \mathcal{D}_{\rm{align}}$ and  $\ \forall \epsilon>0,$
    \begin{equation}
     \exists t_c > 0,  \ \text{such that }\forall t > t_c: \ \begin{aligned}
         p_{\mathrm{\theta_{base}}}(y_{t+1} \mid \sqx,\, \sqy_{:t}) & > 1-\epsilon.
    \end{aligned}
    \label{eq:data_shallow}
\end{equation}
\end{assumption}
That is, the base model is confident at locking in a refusal or harmful response once the first few response tokens are given. Fig.~\ref{fig:assmp_validation} provides empirical support for the soft analogue of Assumption~\ref{assmp:consistency_alignment}. For both \texttt{Llama-2-7B} and \texttt{Llama-3.1-8B}, the probability assigned to the trajectory-consistent next token $p_{\mathrm{\theta_{base}}}(y_{t+1} \mid \sqx,\, \sqy_{:t})$ increases sharply after the initial positions and stabilizes at a substantially higher level for both harmful and refusal continuations, respectively.  This suggests that, once a continuation branch has been established, the base model increasingly favors locally consistent continuation.  Although Assumption~\ref{assmp:consistency_alignment} is stated in a near-deterministic form for analytical clarity, exact-token probabilities around $0.5$ are already high in open-ended natural language, where probability mass is distributed across many semantically equivalent continuations.  Thus, these results can be interpreted as evidence for a soft empirical analogue of autoregressive consistency, and not as a literal verification of the idealized assumption.

In benign generation, autoregressive consistency is essential: it allows a model to maintain the topic, style, and reasoning continuity in a narrow scope. However, it may also induce a shortcut in safety alignment: move only the first few tokens onto a refusal trajectory and leave the rest of the generation to autoregressive consistency. We theoretically investigate whether safety alignment exploits this by analyzing the learning dynamics of alignment, beginning with a formal definition of shallow alignment.
\begin{definition}[Shallow alignment]
\label{def:shallow_alignment}
    Let $p_{\theta_{\mathrm{base}}}$ be the base model and  $p_{\theta_{\mathrm{aligned}}}$ be the aligned model. Suppose the prompt-response data pairs $(\sqx, \sqy)$  in an alignment data set $\mathcal{D}_\mathrm{align}$ follow a distribution $\pdata$. An alignment is shallow if the aligned model $p_{\theta_{\ald}}$ satisfies:  $\forall \varepsilon_1,  \varepsilon_2 > 0$, $\exists t_2 \geq t_1 > 0$  such that 
    \begin{align}
        \mathrm{KL}\Big[\pdata(\cdot \, | \sqx,\sqy_{:t}  )\, |\, p_{\theta_{\ald}}(\cdot \,| \sqx, \sqy_{:t} )\Big] & < \varepsilon_1, \qquad    0\leq t <  t_1,\label{eq:shallow_condition1}\\
     \left|\left| p_{\theta_{\ald}}(\cdot \,| \sqx, \sqy_{:t}) - p_{\mathrm{\theta_{base}}}(\cdot \, | \sqx, \sqy_{:t})\right|\right| & < \varepsilon_2, \qquad  t > t_2,\label{eq:shallow_condition2}
    \end{align}
where $\mathrm{KL}$ denotes Kullback–Leibler divergence.
\end{definition}
The first condition means that the model is well aligned up to the first few response tokens and the second condition means that the alignment training did not move the model far from its base model for late token generations. In practice, this means shallow alignment makes sure that the first few tokens are refusal keywords for a harmful prompt, and the remaining refusal generation is guaranteed by autoregressive consistency. However, if a harmful prefix with length larger than $t_2$ is given, we expect a harmful response to continue. In the following, we take the first condition as a given and mainly investigate the second. 

We now reveal that the shallow alignment defined above can arise from autoregressive consistency. Consider alignment training initialized from the base autoregressive model $\theta_{\base}$ and optimization objectives of SFT or DPO: 
\begin{align}
    L_{\mathrm{SFT}}(\theta) & = - \sum_t\E_{(\sqx, \sqr)\sim \gD} \left[\log  p_\mathrm{\theta}(r_{t+1}|\sqx, \sqr_{:t})\right],\label{eq:sft_obj}\\
    L_{\mathrm{DPO}}(\theta) & = -\E_{(\sqx, \sqr, \sqh)\sim \gD}\left[
        \log \sigma\!
        \left( \beta 
            \left[ 
                \log \frac{p_\theta(\sqr\mid \sqx)}{p_{\theta_{\mathrm{base}}}(\sqr\mid \sqx)} - \log \frac{p_\theta(\sqh\mid \sqx)} {p_{\theta_{\mathrm{base}}}(\sqh\mid \sqx)}
            \right]
        \right)
    \right].\label{eq:dpo_obj}
\end{align}
Although these objectives differ in form, their gradients similarly decompose into token-level terms involving next-token log-probabilities and their parameter gradients $\nabla_\theta p_\theta(y_{t+1}\mid \sqx,\sqy_{:t})$. We therefore analyze the learning dynamics following the strategy below.

\paragraph{Analysis strategy.} Our analysis proceeds in two stages.
First, we separate out the part of the argument that does not depend on the particular alignment objective: under autoregressive consistency, the late-position next-token distribution $p_\theta(\cdot \mid \sqx,\sqy_{:t})$ has a small gradient with respect to the model parameters~(Proposition~\ref{prop:gradient_concentration}). Second, we apply this observation to the SFT and DPO objectives in Eq.~\eqref{eq:sft_obj} and Eq.~\eqref{eq:dpo_obj}, showing that their late-position gradient contributions are indeed suppressed~(Proposition~\ref{prop:loss-func-grad}). As a result, the effective learning signal concentrates only near the early tokens that initiate refusal, giving a learning dynamics explanation of shallow alignment under gradient descent. The final statement is summarized in Theorem~\ref{thm:ac_to_shallow}.

We start with the first stage, which relies on the bound on softmax Jacobian (proof in Appendix~\ref{app:proof-Jacobian-bound}).
\begin{lemma}[Bound on softmax Jacobian]
\label{lemma:Jacobian-bound}
Under Assumption \ref{assmp:consistency_alignment}, the Frobenius norm of the base model softmax Jacobian satisfies
\begin{equation}
    \|A_t\|_F^2 < 2\epsilon, \qquad\forall t>t_c,
\end{equation}
where $(A_t)_{ij} = \nabla_{z_i} p_\mathrm{\theta_{base}}(j\textup{-th token} \,|\sqx, \sqy_{:t} )$ and $z_i =z_\theta(i\textup{-th token}|\sqx, \sqy_{:t} )$ is the output logit for the $i$-th token in the vocabulary before final softmax layer. 
\end{lemma}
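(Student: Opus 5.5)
Write $p_j = p_{\theta_{\base}}(j\text{-th token}\mid \sqx,\sqy_{:t})$ for the softmax output, so $p_j = e^{z_j}/\sum_k e^{z_k}$. The plan has four steps.

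\textbf{Step 1: explicit Jacobian.} Differentiating the softmax gives $(A_t)_{ij} = p_j(\delta_{ij} - p_i)$. Hence
\begin{equation*}
\|A_t\|_F^2 = \sum_{i} p_i^2(1-p_i)^2 + \sum_{i\neq j} p_i^2 p_j^2 .
\end{equation*}

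\textbf{Step 2: split off the dominant token.} Fix $t>t_c$ and let $k$ denote the token $y_{t+1}$. By Assumption~\ref{assmp:consistency_alignment}, $p_k>1-\epsilon$. Set $\delta := 1-p_k = \sum_{j\neq k} p_j < \epsilon$. Every quantity above will be bounded in terms of $\delta$. I expect the result $\|A_t\|_F^2 \le 2\delta^2 + O(\delta^2) \le c\,\delta^2$.

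\textbf{Step 3: bound each group of terms.}
\begin{itemize}
\item The diagonal term for $k$ is $p_k^2\delta^2 \le \delta^2$.
\item The diagonal terms for $i\neq k$ satisfy $\sum_{i\ne k} p_i^2(1-p_i)^2 \le \sum_{i\neq k} p_i^2 \le \delta^2$.
\item The cross terms that involve $k$ give $2p_k^2\sum_{j\ne k}p_j^2 \le 2\delta^2$.
\item The remaining cross terms among $i,j\neq k$ are at most $\bigl(\sum_{i\ne k}p_i^2\bigr)^2 \le \delta^4$.
\end{itemize}
Summing, $\|A_t\|_F^2 \le 4\delta^2+\delta^4$.

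\textbf{Step 4: reach the stated constant.} A cruder route already gives the stated bound. Use $\sum_{i} p_i^2(1-p_i)^2 \le \sum_{i} p_i(1-p_i)$, which equals $1-\sum_i p_i^2$, together with $\sum_{i\neq j}p_i^2p_j^2 \le (\sum_i p_i^2)(1-\sum_i p_i^2)$. This gives
\begin{equation*}
\|A_t\|_F^2 \le 2\Bigl(1-\sum_i p_i^2\Bigr) \le 2\bigl(1-p_k^2\bigr) = 2\delta(2-\delta).
\end{equation*}
This is not quite $2\epsilon$. Instead I will use the quadratic bound $4\delta^2+\delta^4 < 5\epsilon^2$. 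This is below $2\epsilon$ whenever $\epsilon< 2/5$. For $\epsilon\ge 2/5$ I will use the trivial bound $\|A_t\|_F^2 \le 2(1-\sum_i p_i^2) < 2\cdot\tfrac{3}{4}$ near the edge cases. Alternatively, and preferably, I will tighten the bookkeeping: the diagonal $k$ term and the cross terms with $k$ together give $p_k^2(\delta^2 + 2\sum_{j\neq k}p_j^2)$. That bookkeeping should give $\|A_t\|_F^2 \le 2\delta$ directly, from $\sum_{j\neq k} p_j^2 \le \delta\max_j p_j$ and $p_k^2\le p_k$.

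\textbf{Main obstacle.} Only the constant is delicate, because the natural estimate is $O(\epsilon^2)$, which is stronger than needed except for large $\epsilon$. Large $\epsilon$ is handled by the case split: the case $\epsilon\ge 1/2$ is either vacuous or covered by $\|A_t\|_F^2\le 1$.
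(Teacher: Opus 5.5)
Your Steps 1--3 are correct, but the proposal does not establish the bound for every $\epsilon>0$. There is a genuine gap on the window $\epsilon\in(2/5,1/2)$:
\begin{itemize}
\item The estimate $4\delta^2+\delta^4<5\epsilon^2$ gives $<2\epsilon$ only for $\epsilon\le 2/5$. Even the sharper $4\epsilon^2+\epsilon^4$ exceeds $2\epsilon$ as $\epsilon\to 1/2$.
\item Your patch for larger $\epsilon$ does not cover $(2/5,1/2)$. There $p_k>1/2$ does give $2(1-S)<3/2$, where $S=\sum_ip_i^2$. But $3/2>2\epsilon$ for every $\epsilon<3/4$, so nothing is proved in that window.
\item For $\epsilon\ge 1/2$ you need $\|A_t\|_F^2<1$ strictly. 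You only assert $\le 1$. It does follow from your unsimplified crude bound $(1-S)+S(1-S)=1-S^2$, but not from the simplified $2(1-S)$.
\item The ``preferable'' bookkeeping is only announced (``should give''). Read literally, with $\max_j p_j=p_k$, the bound $\sum_{j\ne k}p_j^2\le\delta p_k$ is too weak: the cross terms with $k$ alone are then bounded only by about $2\delta$, leaving no room for the other groups.
\end{itemize}

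The missing idea is the paper's spectral step. The matrix $A_t=\mathrm{diag}(p)-pp^\top$ has quadratic form $v^\top A_t v=\sum_ip_iv_i^2-(\sum_ip_iv_i)^2\in[0,\|v\|^2]$, so its eigenvalues lie in $[0,1]$. Hence $\|A_t\|_F^2=\mathrm{tr}(A_t^2)\le\mathrm{tr}(A_t)=1-S\le 1-p_k^2=\delta(2-\delta)\le 2\delta<2\epsilon$. This holds uniformly in $\epsilon$, with no case split. Your entrywise crude route bounds the off-diagonal sum separately and so loses exactly the factor $1+S$ relative to this. That is why it lands near $4\epsilon$ instead of $2\epsilon$.

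Alternatively, your own decomposition closes if you make two changes in Step~3: keep the factor $p_k^2=(1-\delta)^2$ instead of bounding it by $1$, and take the maximum over $j\ne k$, so that $\sum_{j\ne k}p_j^2\le\delta^2$. The four groups then give $\|A_t\|_F^2\le 3(1-\delta)^2\delta^2+\delta^2+\delta^4=\delta f(\delta)$, with $f(\delta)=4\delta-6\delta^2+4\delta^3$. Since $f'(\delta)=12(\delta-\tfrac12)^2+1>0$ and $f(1)=2$, we get $f\le 2$ on $[0,1]$, hence $\|A_t\|_F^2\le 2\delta<2\epsilon$ for all $\epsilon$.
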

As a result, the conditional probability has a small magnitude at late positions, as shown below.
\begin{prop}[Gradient concentration of  conditional probability]
\label{prop:gradient_concentration}
    Under Assumption~\ref{assmp:consistency_alignment}, and assuming $\|\nabla_{\theta} z\|_F$ is upper bounded by a constant $C$, where $z$ is the logit vector defined in Lemma~\ref{lemma:Jacobian-bound}, then at initialization 
    \begin{equation}
    \big[\|\nabla_{\theta}p_\theta(\cdot \,|\sqx, \sqy_{:t})\|_F \big]_{\theta =\theta_{\rm{base}}} < C(2\epsilon)^\frac{1}{2}, \qquad \forall t>t_c.
    \end{equation} 
\end{prop}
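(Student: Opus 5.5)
The plan is to write the parameter gradient of the next-token distribution as a composition through the logits and apply the chain rule. At $\theta=\theta_{\base}$ and any position $t$, the probability vector $p_\theta(\cdot\mid\sqx,\sqy_{:t})$ depends on $\theta$ only through the logit vector $z=z_\theta(\cdot\mid\sqx,\sqy_{:t})$. Hence
\begin{equation*}
\nabla_\theta p_\theta(\cdot\mid\sqx,\sqy_{:t}) = (\nabla_\theta z)\, A_t ,
\end{equation*}
where $A_t$ is exactly the softmax Jacobian of Lemma~\ref{lemma:Jacobian-bound}, with entries $(A_t)_{ij}=\partial p_j/\partial z_i$. Here $\nabla_\theta z$ is the $|\theta|\times V$ matrix of logit gradients. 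I will state the index convention explicitly so that the product is dimensionally consistent with the definition of $A_t$ in the lemma. Since $A_t$ is symmetric, the order of indices does not actually matter.

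Next, I bound the Frobenius norm of this product by submultiplicativity:
\begin{equation*}
\|(\nabla_\theta z)A_t\|_F \le \|\nabla_\theta z\|_2\,\|A_t\|_F \le \|\nabla_\theta z\|_F\,\|A_t\|_F .
\end{equation*}
Alternatively, one can use $\|BA\|_F\le\|B\|_F\|A\|_F$ directly. The first factor is at most $C$ by hypothesis. The second factor, by Lemma~\ref{lemma:Jacobian-bound} applied under Assumption~\ref{assmp:consistency_alignment}, satisfies $\|A_t\|_F<(2\epsilon)^{1/2}$ for every $t>t_c$. Combining the two gives $\|\nabla_\theta p_\theta(\cdot\mid\sqx,\sqy_{:t})\|_F< C(2\epsilon)^{1/2}$ at $\theta_{\base}$ for all $t>t_c$.

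The only delicate point is bookkeeping, not analysis. I need to make sure that "at initialization" means the Jacobian $A_t$ is evaluated at the base model, which is where Lemma~\ref{lemma:Jacobian-bound} applies. I also need the constant $C$ to bound the logit gradient at the same context $(\sqx,\sqy_{:t})$. Finally, the strict inequality survives: if $C=0$ the left side is zero and the bound is trivially strict, and otherwise strictness follows from the strict bound on $\|A_t\|_F$.
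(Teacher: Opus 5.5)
Your proposal is correct and follows the paper's own proof: the chain rule through the logits, submultiplicativity of the Frobenius norm, and Lemma~\ref{lemma:Jacobian-bound} to get $\|A_t\|_F<(2\epsilon)^{1/2}$ at $\theta_{\mathrm{base}}$ for $t>t_c$. One small slip is in your closing remark: when $C=0$ both sides equal zero, so the strict inequality fails rather than holding trivially, which means the strict bound (in your argument and in the paper's) implicitly requires $C>0$.
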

\begin{proof}
    By chain rule $\nabla_{\theta}p_\theta(\cdot \,|\sqx, \sqy_{:t})= \nabla_{z}p_\theta(\cdot\,|\sqx, \sqy_{:t}) \cdot \nabla_{\theta} z$. At initialization, by Lemma~\ref{lemma:Jacobian-bound} we have 
     $\|\nabla_{\theta}p_\theta(\cdot \,|\sqx, \sqy_{:t})\|_F = \|A_t\cdot \nabla_{\theta} z\|_F \leq \|A_t\|_F \|\nabla_{\theta} z\|_F < C(2\epsilon)^\frac{1}{2}$.
\end{proof}
Proposition~\ref{prop:gradient_concentration} is independent of the particular form of the alignment objective: it shows that next-token distributions at late positions have small parameter gradients under autoregressive consistency. We can specialize this bound to the SFT and DPO objectives in Eq.~\eqref{eq:sft_obj} and Eq.~\eqref{eq:dpo_obj}.  This yields a corresponding concentration property for their loss gradients, summarized in the following proposition~(proof in Appendix~\ref{app:proof-loss-func-grad}).
\begin{prop}[Concentration of alignment gradients]
\label{prop:loss-func-grad}
For SFT (Eq.~\eqref{eq:sft_obj}) or DPO (Eq.~\eqref{eq:dpo_obj}) on a traditional alignment dataset (where autoregressive consistency holds), the dominant contributions to the loss gradient $\nabla_{\theta} L|_{\theta = \theta_\mathrm{base}}$ are from $\nabla_{\theta}p_\theta(\cdot \,|\sqx, \sqy_{:t})$ with $t\leq t_c$. 
\end{prop}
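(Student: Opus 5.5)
We differentiate each objective explicitly at $\theta=\theta_{\base}$ and split the resulting sum over positions into $t\le t_c$ and $t>t_c$. Proposition~\ref{prop:gradient_concentration} then controls every factor $\nabla_\theta p_\theta(\cdot\mid\sqx,\sqy_{:t})$ with $t>t_c$. The only remaining task is to check that the scalar coefficients multiplying these factors stay bounded, so that the late-position block is $O(\epsilon^{1/2})$ per token.

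\textbf{SFT.} Writing $\nabla_\theta \log p_\theta(r_{t+1}\mid\sqx,\sqr_{:t}) = \nabla_\theta p_\theta(r_{t+1}\mid\sqx,\sqr_{:t})/p_\theta(r_{t+1}\mid\sqx,\sqr_{:t})$, we get
\begin{equation*}
\nabla_\theta L_{\sft} = -\sum_t \E\Big[\tfrac{1}{p_\theta(r_{t+1}\mid\sqx,\sqr_{:t})}\nabla_\theta p_\theta(r_{t+1}\mid\sqx,\sqr_{:t})\Big].
\end{equation*}
For $t>t_c$, Assumption~\ref{assmp:consistency_alignment} gives $p_{\theta_{\base}}(r_{t+1}\mid\cdot)>1-\epsilon$, so the coefficient is below $1/(1-\epsilon)$. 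Proposition~\ref{prop:gradient_concentration} bounds the gradient factor by $C(2\epsilon)^{1/2}$, so each late term is at most $C(2\epsilon)^{1/2}/(1-\epsilon)$.

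An alternative, possibly cleaner route writes the gradient in logit space as $(e_{r_{t+1}}-p_t)^\top\nabla_\theta z$. Here $\|e_{r_{t+1}}-p_t\|_2\le\sqrt{2}\,\epsilon$ directly, which gives an even better $O(\epsilon)$ bound. I would mention this as a remark.

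\textbf{DPO.} At $\theta=\theta_{\base}$ both log-ratios vanish, so the outer factor is $\sigma(0)$ and the derivative of $\log\sigma$ there equals $1/2$. This gives
\begin{equation*}
\nabla_\theta L_{\mathrm{DPO}}=-\tfrac{\beta}{2}\,\E\Big[\textstyle\sum_t \nabla_\theta\log p_\theta(r_{t+1}\mid\sqx,\sqr_{:t})-\sum_t\nabla_\theta\log p_\theta(h_{t+1}\mid\sqx,\sqh_{:t})\Big].
\end{equation*}
Each sum has exactly the SFT form. Assumption~\ref{assmp:consistency_alignment} is stated for both refusal and harmful responses, so the same argument bounds the late terms of both trajectories by $C(2\epsilon)^{1/2}/(1-\epsilon)$.

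\textbf{Early positions.} For $t\le t_c$ there is no such control: $p_{\theta_{\base}}(r_{t+1}\mid\cdot)$ can be small, which makes the coefficient $1/p$ large. This is exactly the regime the empirical plot shows. So the late contribution vanishes as $\epsilon\to0$, while the early contribution generically does not, and the early block dominates.

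\textbf{Main obstacle.} Making ``dominant'' precise is the hardest step. Summing over all $t>t_c$ gives a total late contribution of order $(|\sqy|-t_c)\,C\sqrt{2\epsilon}/(1-\epsilon)$, which scales with response length. I would state the result for a fixed maximal length $T$: for any $\delta$, choosing $\epsilon$ small enough makes the late block smaller than $\delta$. This is legitimate because Assumption~\ref{assmp:consistency_alignment} holds for every $\epsilon$, with $t_c$ depending on $\epsilon$.

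The comparison with the early block then requires it to be bounded away from zero. I would make that an explicit nondegeneracy assumption: the early-token gradient, averaged over the dataset, has norm at least some $c>0$ independent of $\epsilon$. I would justify it by the fact that base refusal probabilities at the first tokens are far from $1$, as the empirical plot shows. A secondary subtlety is that $t_c$ grows as $\epsilon$ shrinks; the fixed-length formulation handles this.
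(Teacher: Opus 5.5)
Your proposal is correct and follows essentially the same route as the paper's proof. You split the position sum at $t_c$, rewrite $\nabla_\theta\log p=\nabla_\theta p/p$, bound the coefficient by $1/(1-\epsilon)$ and the numerator via Proposition~\ref{prop:gradient_concentration}, and for DPO use that the weight $1-\sigma(\beta k_\theta)$ equals $1/2$ at $\theta_{\base}$, which gives a late-block bound of order $(T-t_c)\,C\sqrt{2\epsilon}/(1-\epsilon)$ (the paper's version drops the $C$).

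Your additions go beyond the paper, which never lower-bounds the early block. The logit-space remark correctly sharpens the per-token bound to $\sqrt{2}\,C\epsilon$. Your nondegeneracy assumption is the right way to give ``dominant'' real content, but apply it at a fixed $\epsilon$ with $t_c(\epsilon)\ll T$, not in the limit $\epsilon\to0$. With finitely many examples of length at most $T$ and softmax probabilities strictly below $1$, $t_c(\epsilon)$ eventually exceeds $T$, so the late block becomes empty and the limiting statement is vacuous.
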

Proposition~\ref{prop:loss-func-grad} implies that $\nabla_{\theta} L|_{\theta = \theta_\mathrm{base}}$ effectively truncates at the $t_c$-th term, i.e., autoregressive consistency suppresses gradient contributions at late positions for the base model. As a result, a gradient update from the base model primarily changes next-token probabilities near early positions $t < t_c$, while only weakly affecting those at late positions (Proposition~\ref{thm:ac_to_shallow_one_step}), and thus a similar autoregressive consistency holds after the first gradient update (Corollary~\ref{coro:small-update}). The same argument extends to multi-step gradient descent on the base model by induction: once the first update leaves next-token probabilities at late positions close to the base model, autoregressive consistency continues to hold there in a weakened form, so subsequent updates remain similarly suppressed; iterating this argument over multiple gradient steps yields the shallow alignment phenomenon: alignment training mainly modifies the early tokens, while the next-token distributions at later positions remain close to those of the base model. The following theorem summarizes this effect~(proof in Appendix.~\ref{app:multistep_shallow_update}).
\begin{theorem}[Shallow alignment from autoregressive consistency]
\label{thm:ac_to_shallow}
Suppose the base model satisfies Assumption~\ref{assmp:consistency_alignment} with $\epsilon_0 = \epsilon$, and let $\theta_0 = \theta_{\mathrm{base}}$ at the initialization. Consider gradient descent for a fixed iteration count $K \geq 1$
\[
  \theta_{k+1}=\theta_k-\eta\nabla L(\theta_k), \quad k =0, 1, \ldots, K - 1,  
\]
where $L$ is an alignment objective of SFT in Eq.~\eqref{eq:sft_obj} or DPO in Eq.~\eqref{eq:dpo_obj}. Assume that, along the optimization trajectory, the logit Jacobian and the second derivative  of the conditional next-token distribution $\|\nabla_\theta^2 p_{\theta}(\cdot \mid \sqx, \sqy_{: t})\|_F$  at late positions $ t > t_c$ are uniformly bounded by constants $C$ and $B$, respectively, and that the loss gradient is also upper bounded $\|\nabla L(\theta_k)\|\le G$. For $\varepsilon_2$ defined in Definition~\ref{def:shallow_alignment}, let $\alpha=\max \{C / \sqrt{2},\sqrt{B / 2} \}$, and let $\eta \le \frac{ \sqrt{\epsilon_0+\varepsilon_2} - \sqrt{\epsilon_0} }{ \alpha K G }$. Define $\epsilon_k$ recursively by
\begin{equation}
    \quad \epsilon_{k+1} = \epsilon_k + C\sqrt{2\epsilon_k}\eta\|\nabla L(\theta_k)\| + \frac{B}{2}\eta^2\|\nabla L(\theta_k)\|^2.    
\end{equation}

Then, the model satisfies a weaker form of autoregressive consistency after each gradient update
\begin{equation}
    \forall k\in[K],\ t > t_c: \quad p_{\theta_k}(y_{t+1}\mid \sqx, \sqy_{:t}) > 1 - \epsilon_k,        
\end{equation}
and SFT/DPO gradient contributions at late positions are also suppressed to $O(\sqrt{\epsilon_k})$.  Therefore, the effective alignment gradient remains concentrated on the early positions $t \le t_c$ while leaving late positions largely unchanged throughout training. As a result, the alignment is shallow, i.e., SFT/DPO changes the model mainly near the early response tokens while leaving next-token distributions  at late positions close to those of the base model 
\begin{equation}
   \|\Delta_{\theta_{\base}\to \theta_{\ald}} p_{\theta}(\cdot \mid \sqx, \sqy_{:t})\| := \|p_{\theta_K}(\cdot\mid \sqx, \sqy_{:t}) - p_{\theta_{\mathrm{base}}}(\cdot\mid \sqx, \sqy_{:t})\|_F \le \varepsilon_2, \quad t>t_c.
\end{equation}
\end{theorem}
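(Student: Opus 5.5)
The plan is an induction on the iteration index $k$ that carries two facts together: the weakened autoregressive consistency $p_{\theta_k}(y_{t+1}\mid\sqx,\sqy_{:t})>1-\epsilon_k$ for all $t>t_c$, and a bound on the cumulative drift of the late-position next-token distributions away from $p_{\theta_{\base}}$. The base case $k=0$ is exactly Assumption~\ref{assmp:consistency_alignment} with $\epsilon_0=\epsilon$.

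For the inductive step, assume consistency holds at $\theta_k$ with level $\epsilon_k$. Lemma~\ref{lemma:Jacobian-bound} only uses the fact that the target token carries probability at least $1-\epsilon$, so it applies verbatim at $\theta_k$ and gives $\|A_t\|_F^2<2\epsilon_k$ for the softmax Jacobian there. Repeating the argument of Proposition~\ref{prop:gradient_concentration} with the uniform logit-Jacobian bound $C$ then yields $\|\nabla_\theta p_{\theta_k}(\cdot\mid\sqx,\sqy_{:t})\|_F< C\sqrt{2\epsilon_k}$. Next I Taylor-expand $p_\theta(\cdot\mid\sqx,\sqy_{:t})$ at $\theta_k$ along the step $-\eta\nabla L(\theta_k)$, using the uniform Hessian bound $B$ for the second-order remainder:
\[
\|p_{\theta_{k+1}}-p_{\theta_k}\|\le C\sqrt{2\epsilon_k}\,\eta\|\nabla L(\theta_k)\|+\tfrac{B}{2}\eta^2\|\nabla L(\theta_k)\|^2 .
\]
The coordinate at $y_{t+1}$ can move by at most this amount, so the target probability stays above $1-\epsilon_{k+1}$, which is the stated recursion. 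The statement that the SFT/DPO late-position contributions are $O(\sqrt{\epsilon_k})$ follows from the same gradient bound, plugged into the token-level decomposition of Proposition~\ref{prop:loss-func-grad}. For SFT the relevant factor is $\nabla p/p$ with $p>1-\epsilon_k$; for DPO it is the bounded sigmoid weight multiplying $\nabla\log p$.

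To close the recursion, I set $s_k=\sqrt{\epsilon_k}$ and $g=\eta G$. Then
\[
\epsilon_{k+1}\le s_k^2+\sqrt2\,C s_k g+\tfrac B2 g^2\le (s_k+\alpha g)^2 \qquad\text{with }\alpha=\max\{C/\sqrt2,\sqrt{B/2}\},
\]
because $2\alpha\ge\sqrt2 C$ and $\alpha^2\ge B/2$. Hence $s_{k+1}\le s_k+\alpha\eta G$, which gives $\sqrt{\epsilon_K}\le\sqrt{\epsilon_0}+K\alpha\eta G\le\sqrt{\epsilon_0+\varepsilon_2}$ under the stated step-size condition, i.e.\ $\epsilon_K\le\epsilon_0+\varepsilon_2$.

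For the final drift bound I telescope:
\[
\|p_{\theta_K}-p_{\theta_{\base}}\|\le\sum_k\bigl(C\sqrt{2\epsilon_k}\,\eta G+\tfrac B2\eta^2G^2\bigr)\le\sum_k(\epsilon_{k+1}-\epsilon_k)=\epsilon_K-\epsilon_0\le\varepsilon_2 .
\]

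The main obstacle is this last step. The per-step increment of $\epsilon_k$ was derived from one coordinate, while the quantity to bound is a Frobenius norm of the whole distribution, so I need to check that both are controlled by the same first- plus second-order Taylor bound. The norm conventions are the delicate part: the Frobenius norm of the vector Jacobian against the operator bound when multiplying by $\nabla L$, and the $\sqrt2$ factors absorbed into $\alpha$. If the constants do not line up exactly, I would use the cruder bound $\epsilon_K-\epsilon_0\le(\sqrt{\epsilon_0}+K\alpha\eta G)^2-\epsilon_0\le\varepsilon_2$, which only relies on the $s_k$ recursion.
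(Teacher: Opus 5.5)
Your proposal follows the paper's own proof: induction reusing Lemma~\ref{lemma:Jacobian-bound} at $\theta_k$, a second-order Taylor step whose right-hand side defines $\epsilon_{k+1}$, telescoping the whole-vector drift to $\epsilon_K-\epsilon_0$, and closing with $\sqrt{\epsilon_K}\le\sqrt{\epsilon_0}+\alpha K\eta G$; your completed-square derivation of that last bound is more explicit than the paper, which only asserts it. One slip needs fixing: in your telescoping display the second inequality runs the wrong way, because $\epsilon_{k+1}-\epsilon_k$ is defined with $\|\nabla L(\theta_k)\|\le G$. Keep the actual gradient norm there, so the per-step Taylor bound on the full vector equals $\epsilon_{k+1}-\epsilon_k$ by definition, and use $G$ only when bounding $\epsilon_K$; this also removes your coordinate-versus-Frobenius worry, since the increment was already derived from the full-vector bound.
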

\paragraph{Consequence: aligned model differs from base mainly at depth $t \leq t_c$.}This is precisely the  finding of Qi et al.~\citep{qi2025safety} the aligned model differs from the base model only at the first few output positions. 
The base model's existing capacity to continue a sufficiently long given prefix  is left entirely untouched by alignment training. When the said prefix is harmful, jailbreaking happens. 

Proposition~\ref{prop:loss-func-grad} and Theorem~\ref{thm:ac_to_shallow} together suggest that we might be able to mitigate shallow alignment by breaking harmful autoregressive consistency at arbitrarily late token positions, thereby enhancing the gradient norm of the loss functions. This will be the main topic of Section~\ref{sec:framework}.

\subsection{Attacks Need Not Target Only the Start}
\label{sec:broader_attack}
Our analysis of the learning dynamics of safety alignment identifies autoregressive consistency as a deeper failure mechanism. The same mechanism also exposes a broader class of  attacks targeting the trajectory. Attacks targeting prefixes are effective not only because they bypass shallow safety alignment near the beginning of generation, but also because they induce a harmful continuation state that autoregressive consistency can then preserve and extend. From this perspective, the start of generation is only one possible attack location. More generally, any attack that induces an autoregressive state from which the model favors harmful continuation over refusal can be dangerous, regardless of where that state appears in the output trajectory. This is the attack-side counterpart of Fig.~\ref{fig:summary}: harmful autoregressive consistency can be triggered throughout the trajectory, not only at the prefix.

Specifically, let $\sqx$ be a harmful prompt, let $\sqy$ be the output trajectory, let $\tau_{t} = (\sqx, \sqy_{:t})$ be the partial autoregressive state. Let $\Phi_{\theta}(\tau_t)$  measure how strongly the model favors harmful continuation over refusal response at $\tau_t$, which we call \emph{harmful autoregressive consistency margin}.
A simple example is $\Phi_{\theta}(\tau_t) = \log p_{\theta}(\sqh\mid \tau_t) - \log p_{\theta}(\sqr\mid \tau_t)$, where $\sqh$ is a harmful span and $\sqr$ is a refusal span. Then, for example, prefill attacks can be seen as constructing $\tilde{\tau}_t = (\sqx, \tilde{\sqy}_{:t})$, where $\tilde{\sqy}_{:t}$ is an unsafe assistant prefix and $t$ is small, such that $\Phi_\theta(\tilde{\tau}_t)$ is large. Similarly, input-side suffix attacks such as GCG~\citep{zou2023universaltransferableadversarialattacks} can be viewed as modifying the prompt context so that the model's initial autoregressive state has a large harmful margin. By exploiting autoregressive consistency throughout the output trajectory, the attack need not be confined to the start. We introduce random insertion attack as a concrete example below.
\paragraph{Random insertion attack.}If the model's autoregressive consistency is strong, the attacker may only need to insert a short harmful span to trigger harmful continuation. Random insertion attack works by forcing the model onto a harmful branch at a random position in the response, after which autoregressive consistency sustains that branch. Specifically, we construct the random insertion attack using a triplet dataset$(\sqx, \sqr, \sqh)$, where $\sqr$ is the refusal trajectory and $\sqh$ is a harmful response which can be short.
For each triplet, we take a short span $\sqh_{:q}$ of length $q$, and randomly sample a position $i$ from $|\sqr|$. This yields the final attack $\tilde{\tau}_{i + q} = (\sqx, [\sqr_{:i}; \sqh_{:q}])$~(Fig.~\ref{fig:random_insertion_attack}). The model is then asked to continue from $\tilde{\tau}_{i + q}$ according to $p_{\theta}(\cdot \mid \tilde{\tau}_{i + q})$. If the inserted span induces harmful autoregressive consistency, the model may leave the refusal trajectory and continue the harmful branch. In this sense, random insertion attack covers prefill attack as a special case in which the insertion position is restricted to the beginning of the output trajectory.
\begin{wrapfigure}{r}{0.49\textwidth}
    \centering
    \vspace{0.2cm}
    \includegraphics[width=0.48\textwidth]{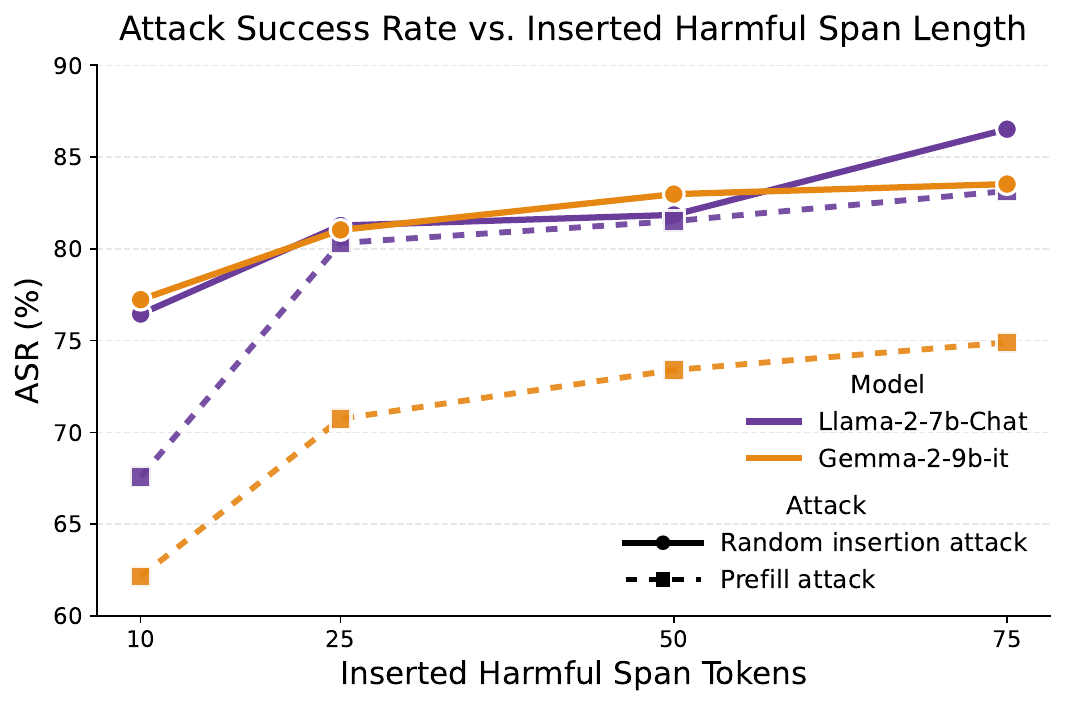}
    \caption{ASR vs. harmful span lengths for either prefill attack or random insertion attack evaluated on HEx-PHI safety benchmark~\citep{anonymous2024finetuning}.}
    \label{fig:random_attack_on_raw_models}
\end{wrapfigure}

The random insertion attack is intentionally simple. Its purpose is to demonstrate that harmful autoregressive consistency can be induced away from the start of generation. Even when the model is forced to begin---and to remain for many tokens---on a refusal trajectory, hence not ``shallow''~(Fig.~\ref{fig:random_insertion_attack}), a short harmful span inserted later can still redirect generation onto a harmful branch. Specifically, Fig.~\ref{fig:random_attack_on_raw_models} shows that, for both \texttt{Llama-2-7B-Chat}~\citep{touvron2023llama2openfoundation} and \texttt{Gemma-2-9b-it}~\citep{gemmateam2024gemma2improvingopen}, ASR for random insertion attacks are above 75\% and generally increases with harmful span length. In addition, even 10 tokens harmful span inserted in an otherwise safe refusal output can  dramatically redirect the generation to a harmful output, revealing the limitation of only protecting the prefix. Therefore, the vulnerability is not merely a failure of shallow prefixes, but also a consequence of the model’s tendency to preserve and extend the current autoregressive trajectory. Autoregressive consistency should therefore be treated as a crucial consideration in future safety alignment.

\section{Breaking Autoregressive Consistency by Adversarial Safety Alignment}
\label{sec:framework}
Qi et al.~\citep{qi2025safety} made an important step by identifying shallow safety alignment and arguing that safety alignment should be made deeper.  Our analysis in Section~\ref{sec:theory} suggests that depth alone is still an incomplete objective. Training deeper may extend the defended horizon against prefix-like attacks, but it does not by itself address the underlying dynamical problem.  As shown in Section~\ref{sec:broader_attack}, random insertion attack can remain effective even when the beginning of the output trajectory is already protected by a safe refusal (hence not ``shallow''). The broader failure mechanism arises from autoregressive consistency: once generation enters a harmful branch, whether near the beginning or in the middle of the output, the model tends to preserve and extend that branch. 
This implies that safety alignment should not only be trained to extend refusal farther from the beginning of generation. 

Then what should the objective be instead? Following the conceptual chain in Fig.~\ref{fig:summary}, we argue that safety alignment should also learn to \emph{break harmful autoregressive consistency} and recover from harmful continuation states throughout the output trajectory.  Following this principle, we introduce adversarial safety alignment in Section~\ref{sec:asa_framework} as an initial framework, and then present training with random worst-insertion attack in Section~\ref{sec:training_ra} as a practical first approximation to this framework. 

The purpose of adversarial safety alignment is to translate the preceding theoretical insight into a trainable objective. Our goal is not to present a final defense recipe, but to use a simple instantiation to study whether training against harmful continuation states can improve robustness beyond depth-specialized alignment.
\subsection{Adversarial Safety Alignment}
\label{sec:asa_framework}
Our core intuition is the following: given a refusal response $\sqr$, even from the worst corrupted continuation state along this trajectory~(i.e., a state from which the model can most easily establish and sustain a locally consistent harmful branch), training should make the model recover safe refusal behavior. In practice, we use the original refusal trajectory $\sqr$ as the recovery target. As in the conceptual chain in Fig.~\ref{fig:summary}, this encourages safety alignment to become robust to harmful continuation induced by autoregressive consistency throughout the output trajectory.

We therefore propose \textbf{adversarial safety alignment} as an initial framework inspired by adversarial training~\citep{madry2019deeplearningmodelsresistant}.
In standard adversarial training, the inner problem typically searches for a perturbation that maximizes the same loss minimized by the outer problem. Here, the roles of the two problems are different. The inner problem identifies a continuation state from which the current model most strongly favors harmful continuation, while the outer problem trains the model to recover safe refusal behavior from that state. Concretely, for each training example, we construct a set of candidate continuation states along the output trajectory, select the state that most strongly supports harmful autoregressive consistency, and then minimize a supervised refusal loss conditioned on the selected state.

Formally, given an LLM parameterized by $\theta$, \textbf{Adversarial Safety Alignment} is formulated as
\begin{equation}
\label{eq:asa_objective}
\begin{aligned}
      \min_{\theta}\; \E_{(\sqx, \sqr)\sim \gD} 
\Big[
    L_{\sft}\big( \sqr, \tau^{\star}_{\theta}(\sqx, \sqr); \theta\big)
\Big], \quad  s.t.\;\;  \tau^{\star}_{\theta}(\sqx, \sqr) \in \underset{\tilde{\tau}_t \in \gA_{\theta}(\sqx, \sqr)}{\arg\max} \ \Phi_{\theta}(\tilde{\tau}_t).
\end{aligned}
\end{equation}
Here, $(\sqx, \sqr)$ is a pair of harmful prompt $\sqx$ and a corresponding refusal response $\sqr$. The set $\gA_{\theta}(\sqx, \sqr)$ contains candidate harmful states adversarially constructed from $(\sqx, \sqr)$ along the refusal trajectory. The inner objective $\Phi_{\theta}(\tilde{\tau}_t)$, which we call harmful autoregressive consistency margin, measures how easily the current model can maintain harmful autoregressive consistency from the adversarially constructed state $\tilde{\tau}$. The outer loss
\[
    L_{\sft}\big( \sqr, \tau^{\star}_{\theta}(\sqx, \sqr); \theta\big) = -\log p_{\theta}\left( \sqr | \tau^{\star}_{\theta}(\sqx, \sqr)\right)
\]
then trains the model to recover the refusal response from the selected adversarial state.
Although this framework is motivated by the goal of breaking harmful autoregressive consistency, Eq.~\eqref{eq:asa_objective} should be understood as a practical surrogate rather than a direct optimization of a formal trajectory-level measure of that quantity.

A practical implementation of adversarial safety alignment therefore requires two design choices.
The first is the candidate set $\gA_{\theta}(\sqx,\sqr)$, which should cover harmful continuation states throughout the output trajectory. The second is the inner objective $\Phi_{\theta}(\tilde{\tau}_t)$, which should measure how strongly a perturbed state favors harmful continuation over refusal recovery.
We next present a first practical approximation to this framework by specifying concrete choices for both components.

\subsection{A Practical Instantiation: Adversarial Safety Alignment with Worst-Insertion Attack}
\label{sec:training_ra}
Since solving the adversarial safety alignment objective in Eq.~\eqref{eq:asa_objective} exactly is intractable, we instantiate it with a practical approximation based on random worst-insertion attack. This choice is directly motivated by Section~\ref{sec:broader_attack}: random insertion attack is a concrete example of the broader class of attacks that exploit harmful autoregressive consistency by inducing harmful continuation states along the output trajectory. To instantiate the framework, we specify both the candidate set $\gA$ and the inner objective $\Phi$.

\paragraph{Construction of $\gA$.}We construct $\gA$ using random insertion attack (Section~\ref{sec:broader_attack}). Because this construction uses a harmful span $\sqh$ in addition to the harmful prompt $\sqx$ and refusal trajectory $\sqr$, we denote the resulting candidate set by $\gA(\sqx, \sqr, \sqh)$. We construct it using a triplet dataset $(\sqx, \sqr, \sqh)$, where $\sqr$ is the refusal trajectory and $\sqh$ is a harmful response from which the inserted span is taken. For each triplet, we take a short harmful span $\sqh_{: q}$ of length $q$, and sample $k$ positions $\{i_j\}_{j=1}^k$ uniformly from $[|\sqr|]$. This yields the candidate set $\gA(\sqx,\sqr,\sqh)=\left\{(\sqx, [\sqr_{: i_j};\sqh_{ :q}])\right\}_{j=1}^k$. Each element of $\gA(\sqx,\sqr, \sqh)$ is thus a perturbed continuation state obtained by truncating the refusal trajectory at position $i_j$ and appending a short harmful span $\sqh_{:q}$, thereby forcing the model into a locally harmful state at that point. In this way, the candidate set covers diverse corrupted states throughout the refusal trajectory while keeping the inner search tractable.

\textbf{Inner objective $\Phi$.} For the inner objective, we compare harmful continuation with refusal recovery from the same perturbed state. We first prepare a bank of $m$ short candidate refusal continuations, $\gR=\{\sqr_j^{\cand}\}_{j=1}^m$, which we use to measure how strongly the model favors refusal recovery. Then, for each $\sqtau_{t}\in\gA(\sqx,\sqr, \sqh)$, we define the harmful autoregressive consistency margin as 
\[
    \Phi_{\theta}(\tau_t)
    =
    H(\sqtau_t;\theta)-R(\sqtau_t;\theta).
\]
Here, 
\[
    H(\sqtau_t;\theta) := \frac{1}{T_o}\log p_\theta\!\left(\sqh_{q + 1:q+T_o}\mid \sqtau_t\right)
\]
measures how strongly the model continues the harmful branch from the perturbed state. We interpret this as a practical score of harmful autoregressive consistency. 
\[
    R(\sqtau_t;\theta):=\max_{\sqr^{\cand}\in\gR} \frac{1}{|\sqr^{\cand}|}\log p_\theta\!\left(\sqr^{\cand}\mid \sqtau_t\right)
\]
measures how easily the model can restart refusal from that same state, where the $\max$ operator selects the single refusal candidate $\sqr^{\cand}$ that the model most prefers. Thus, maximizing $\Phi_\theta$ selects the inserted candidate state where the model most strongly favors harmful continuation over refusal recovery. We refer to this approximation as random worst-insertion attack.

\subsection{Experiments}
\label{sec:exp}
\paragraph{Implementation details.}For adversarial safety alignment with random worst-insertion attack, we use the objective Eq.~\eqref{eq:asa_objective}, denoted by $L_{\mathrm{Insertion}}$ for simplicity, for a triplet dataset $(\sqx, \sqr, \sqh)\sim D_{\safety}$ with $\gA$ and $\Phi$ specified above and add a utility loss $L_{\uti}$ on a utility dataset $(\sqz, \sqy)\sim D_{\uti}$ to maintain model utility. Thus, the overall objective is
\begin{equation}
    \min_{\theta} \ \lambda L_{\mathrm{Insertion}} + (1 - \lambda) L_{\uti}, \quad \text{where }\  L_{\uti}:= - \E_{(\sqz, \sqy)\sim D_{\uti}}\Big[\log p_{\theta}(\sqy | \sqz) \Big].
\end{equation}
We refer to the method as \texttt{Insertion} for simplicity below. For comparison, we study two baselines, \texttt{Clean} and \texttt{Prefill}. Specifically, (1) For \texttt{Clean}, we fine-tune the model directly on the clean harmful prompt and refusal response pair $(\sqx, \sqr, \sqh)\sim D_{\safety}$ by standard SFT~(we do not use the harmful response $\sqh$, hence the name \texttt{Clean}), i.e., the objective is 
\[
    \min_{\theta}\ - \lambda \E_{(\sqx, \sqr, \sqh)\sim D_{\safety}}\Big[\log p_{\theta}(\sqr | \sqx) \Big] + (1 - \lambda)L_{\uti};
\]
(2) For \texttt{Prefill}, we apply the deep alignment method from Qi et al.~\citep{qi2025safety} by fine-tuning with SFT on an augmented dataset $([\sqx;\, \sqh_{:q}], \sqr)$ for $(\sqx, \sqr, \sqh) \sim D_{\safety}$, i.e., the objective is
\[
    \min_{\theta}\ - \lambda \E_{(\sqx, \sqr, \sqh)\sim D_{\safety}}\Big[\log p_{\theta}(\sqr | \sqx, \sqh_{:q}) \Big] + (1 - \lambda)L_{\uti},
\]
which corresponds to making safety alignment deeper against prefix-style attacks. This objective can also be seen as adversarial safety alignment with the inner problem solved by prefill attack, hence the simplified name \texttt{Prefill}.

In our experiment, we set $\lambda=0.5$ for all three methods, and use $q=50$ for both \texttt{Prefill} and \texttt{Insertion}\footnote{A natural variant is to sample $q$ from a distribution, such as a uniform distribution, for each data point. Since our primary goal here is to study the underlying mechanism, we focus on the simplest setting and leave this generalization to future work.}. For \texttt{Insertion}, we set the number of candidate positions in $\gA$ to $k=32$. For the inner objective $\Phi$, we use $m=4$ for the bank of candidate refusal restart. We use \texttt{Llama-2-7b-Chat} model and the safety data from Qi et al.~\citep{qi2025safety} as $D_{\safety}$, respectively. The utility dataset $D_{\uti}$ is from Alpaca data~\citep{alpaca}. We evaluate ASR on the HEx-PHI safety benchmark and defer further details to Appendix~\ref{app:exp}.
\begin{table}
\centering
\caption{\textbf{ASR (\%) $\downarrow$} comparison across attacks with harmful span length 25, 50, 75, and 100. }
\vspace{0.1cm}
\begin{tabular}{c|cccc|cccc}
\toprule
\multirow{2}{*}{\small{\textbf{Method}}}
& \multicolumn{4}{c|}{\small{\textbf{Random Insertion Attack}}}
& \multicolumn{4}{c}{\small{\textbf{Prefill Attack}}}\\
\cline{2-9}
&  \small{25} & \small{50} & \small{75} & \small{100}
&  \small{25} & \small{50} & \small{75} & \small{100}\\
\hline
\small{\texttt{Clean}}
& \tiny{$76.6\pm 2.2$} & \tiny{$80.4\pm 1.4$} & \tiny{$83.6\pm 3.1$} & \tiny{$85.8\pm 3.0$} 
& \tiny{$75.3\pm 3.0$} & \tiny{$80.4\pm 2.0$} & \tiny{$81.6\pm 2.7$} & \tiny{$84.3\pm 2.5$}\\
\small{\texttt{Prefill}}
& \tiny{$18.4\pm 1.0$} & \tiny{$22.3\pm 1.3$} & \tiny{$27.6\pm 0.9$} & \tiny{$34.3\pm 0.4$}
& \tiny{$3.8\pm 1.1$} & \tiny{$9.0\pm 1.8$} & \tiny{$13.2\pm 1.6$} & \tiny{$20.9\pm 2.2$}\\
\small{\texttt{Insertion}}
& \tiny{$2.7\pm 0.3$} & \tiny{$3.4\pm 1.0$} & \tiny{$4.8\pm 0.2$} & \tiny{$5.9\pm 0.4$}
& \tiny{$6.5\pm 1.8$} & \tiny{$11.0\pm 2.7$} & \tiny{$13.5\pm 2.4$} & \tiny{$21.2\pm 2.3$}\\
\bottomrule
\end{tabular}
\label{tab:asr_comparison}
\end{table}
\paragraph{Results.}Tab.~\ref{tab:asr_comparison} reports ASR for random insertion attack~(Section~\ref{sec:broader_attack}) and prefill attack across harmful span lengths 25, 50, 75, and 100 tokens  on \texttt{Llama-2-7b-Chat} models (further) aligned with \texttt{Clean}, \texttt{Prefill}, and \texttt{Insertion}, respectively. Results are reported as mean $\pm$ standard deviation across three runs. Tab.~\ref{tab:asr_comparison} shows three main findings: (1) \texttt{Clean} remains highly vulnerable to both attacks, even though both attacks are simple and inexpensive, indicating that standard safety alignment remains brittle; (2) \texttt{Prefill} sharply reduces ASR on the matched prefill attack, but transfers poorly to random insertion attack, confirming that making alignment deeper helps but still is incomplete; (3) \texttt{Insertion} achieves the lowest ASR on random insertion attack while remaining competitive on prefill attack, suggesting that adversarial safety alignment can better mitigates the underlying failure mechanism induced by autoregressive consistency. 

\paragraph{Evaluation on other attacks.}\begin{wraptable}{r}{0.51\textwidth}\vspace{-0.3cm}
\centering
\caption{\textbf{ASR (\%) $\downarrow$} of different attacks on \texttt{Llama-2-7b-Chat}.}
\begin{tabular}{c|cccc}
    \toprule
     & \small{\textbf{GCG}} & \small{\textbf{PAIR}} & \small{\textbf{TAP}} & \small{\textbf{AutoDAN}}\\
    \hline
    \small{\texttt{Initial}}
    & \small{34.5} & \small{7.5} & \small{5.5} & \small{0.5}
     \\
    \small{\texttt{Insertion}}
    & \small{2.5} & \small{1.0} & \small{1.0} & \small{0.0}
     \\
    \bottomrule
\end{tabular}
\label{tab:other_attacks}
\end{wraptable} We further evaluate on several common attacks, including: (1) GCG, a white-box suffix attack; (2) PAIR~\citep{chao2024jailbreakingblackboxlarge}, a black-box attack that uses an attacker LLM to generate prompts; (3) TAP~\citep{mehrotra2024treeattacksjailbreakingblackbox}, a black-box attack that utilizes an attacker LLM to iteratively refine prompts; and (4) AutoDAN~\citep{liu2024autodan} which performs adversarial input perturbations. We use the raw \texttt{Llama-2-7b-Chat} model (named as \texttt{Initial}) and a further aligned version obtained by adversarial safety alignment (named as \texttt{Insertion} as before). We use the HarmBench~\citep{mazeika2024harmbenchstandardizedevaluationframework} for the evaluation, where we take the 200 standard behaviors from it as our evaluation dataset due to the significant computational cost of attacks like GCG. The results are summarized in Tab.~\ref{tab:other_attacks}, showing that adversarial safety alignment remains competitive beyond random insertion attack.

\paragraph{Evaluation on other models.}In Tab.~\ref{tab:asr_comparison_more_models}, we repeat the experiments from Tab.~\ref{tab:asr_comparison} on two additional models: \texttt{Llama-3.1-8B-Instruct}~\citep{grattafiori2024llama3herdmodels} and \texttt{Gemma-2-9b-it}~\citep{gemmateam2024gemma2improvingopen} to validate that our conclusions also hold for these models. The results show that the main finding in Tab.~\ref{tab:asr_comparison} transfers across model families. On both \texttt{Llama-3.1-8B-Instruct} and \texttt{Gemma-2-9b-it}, adversarial safety alignment substantially improves robustness under random insertion attack, while still remaining competitive on prefill attack.
\begin{table}[h!]
\centering
\vspace{-0.2cm}
\caption{Comparison of \textbf{ASR (\%) $\downarrow$} on different models across attacks with harmful span length 25, 50, 75, and 100. These models are trained using the same set of parameters as in Tab.~\ref{tab:asr_comparison}}
\begin{tabular}{cc|cccc|cccc}
    \toprule
    \multirow{2}{*}{\small \textbf{Model}} & \multirow{2}{*}{\small  \textbf{Method}}
    & \multicolumn{4}{c|}{\small \textbf{Random Insertion Attack}}
    & \multicolumn{4}{c}{\small \textbf{Prefill Attack}} \\
    \cline{3-10}
    & & \small 25 & \small 50 & \small 75 & \small 100
      & \small 25 & \small 50 & \small 75 & \small 100 \\
    \hline
    \multirow{2}{*}{\small{\texttt{Llama-3.1-8B-Instruct}}}
    & {\small \texttt{Prefill}} & \small{18.3} & \small{23.7} & \small{27.8} & \small{35.3} & \small{1.3} & \small{4.2} & \small{20.6} & \small{29.0} \\
    & {\small \texttt{Insertion}} &  \small{1.9} & \small{2.5} & \small{6.4} & \small{10.4} & \small{1.6} & \small{4.2} & \small{10.9} & \small{23.1} \\
    \hline
    \multirow{2}{*}{{\small \texttt{Gemma-2-9b-it}}}
    & {\small \texttt{Prefill}}      & \small{13.2} & \small{17.0} & \small{23.3} & \small{24.8} & \small{7.1} & \small{13.5} & \small{22.0} & \small{22.6} \\
    & {\small \texttt{Insertion}}  & \small{0.3} & \small{2.1} & \small{6.3} & \small{6.6} & \small{13.8} & \small{18.1} & \small{27.1} & \small{29.2}  \\
    \bottomrule
\end{tabular}
\label{tab:asr_comparison_more_models}
\vspace{-0.1cm}
\end{table}
\vspace{-0.1cm}
\section{Discussion}
\label{sec:discuss}
\vspace{-0.1cm}
Our work explains shallow safety alignment through the lens of autoregressive consistency by analyzing the learning dynamics of safety alignment. We show that autoregressive consistency can concentrate alignment updates near the early response tokens, providing a mechanistic explanation for why safety alignment can become shallow. We further argue that the same mechanism can hurt safety alignment by preserving and extending a harmful continuation once it has begun. This suggests a broader class of attacks that induce harmful continuation states inside the output trajectory, with random insertion attack as one concrete example.

These observations suggest that making alignment deeper is helpful but incomplete, because it does not directly target the underlying harmful mechanism. Safety alignment should also train models to break harmful autoregressive consistency and recover safe behavior from harmful continuation states. As an initial step, we propose an adversarial safety alignment framework and instantiate it with random worst-insertion training. We stress that the goal of this work is not to present a final defense recipe, but to identify harmful autoregressive consistency as a mechanism underlying safety fragility and to study the alignment objectives suggested by this mechanism. We hope this work encourages future alignment and attack methods to explicitly consider the role of autoregressive consistency throughout generation.
\bibliography{main}

@inproceedings{
wei2022finetuned,
title={Finetuned Language Models are Zero-Shot Learners},
author={Jason Wei and Maarten Bosma and Vincent Zhao and Kelvin Guu and Adams Wei Yu and Brian Lester and Nan Du and Andrew M. Dai and Quoc V Le},
booktitle={International Conference on Learning Representations},
year={2022},
url={https://openreview.net/forum?id=gEZrGCozdqR}
}

@inproceedings{
rafailov2023direct,
title={Direct Preference Optimization: Your Language Model is Secretly a Reward Model},
author={Rafael Rafailov and Archit Sharma and Eric Mitchell and Christopher D Manning and Stefano Ermon and Chelsea Finn},
booktitle={Thirty-seventh Conference on Neural Information Processing Systems},
year={2023},
url={https://openreview.net/forum?id=HPuSIXJaa9}
}

@misc{bai2022traininghelpfulharmlessassistant,
      title={Training a Helpful and Harmless Assistant with Reinforcement Learning from Human Feedback}, 
      author={Yuntao Bai and Andy Jones and Kamal Ndousse and Amanda Askell and Anna Chen and Nova DasSarma and Dawn Drain and Stanislav Fort and Deep Ganguli and Tom Henighan and Nicholas Joseph and Saurav Kadavath and Jackson Kernion and Tom Conerly and Sheer El-Showk and Nelson Elhage and Zac Hatfield-Dodds and Danny Hernandez and Tristan Hume and Scott Johnston and Shauna Kravec and Liane Lovitt and Neel Nanda and Catherine Olsson and Dario Amodei and Tom Brown and Jack Clark and Sam McCandlish and Chris Olah and Ben Mann and Jared Kaplan},
      year={2022},
      eprint={2204.05862},
      archivePrefix={arXiv},
      primaryClass={cs.CL},
      url={https://arxiv.org/abs/2204.05862}, 
}

@inproceedings{
qi2025safety,
title={Safety Alignment Should be Made More Than Just a Few Tokens Deep},
author={Xiangyu Qi and Ashwinee Panda and Kaifeng Lyu and Xiao Ma and Subhrajit Roy and Ahmad Beirami and Prateek Mittal and Peter Henderson},
booktitle={The Thirteenth International Conference on Learning Representations},
year={2025},
url={https://openreview.net/forum?id=6Mxhg9PtDE}
}

@misc{ouyang2022traininglanguagemodelsfollow,
      title={Training language models to follow instructions with human feedback}, 
      author={Long Ouyang and Jeff Wu and Xu Jiang and Diogo Almeida and Carroll L. Wainwright and Pamela Mishkin and Chong Zhang and Sandhini Agarwal and Katarina Slama and Alex Ray and John Schulman and Jacob Hilton and Fraser Kelton and Luke Miller and Maddie Simens and Amanda Askell and Peter Welinder and Paul Christiano and Jan Leike and Ryan Lowe},
      year={2022},
      eprint={2203.02155},
      archivePrefix={arXiv},
      primaryClass={cs.CL},
      url={https://arxiv.org/abs/2203.02155}, 
}

@misc{zou2025representationengineeringtopdownapproach,
      title={Representation Engineering: A Top-Down Approach to AI Transparency}, 
      author={Andy Zou and Long Phan and Sarah Chen and James Campbell and Phillip Guo and Richard Ren and Alexander Pan and Xuwang Yin and Mantas Mazeika and Ann-Kathrin Dombrowski and Shashwat Goel and Nathaniel Li and Michael J. Byun and Zifan Wang and Alex Mallen and Steven Basart and Sanmi Koyejo and Dawn Song and Matt Fredrikson and J. Zico Kolter and Dan Hendrycks},
      year={2025},
      eprint={2310.01405},
      archivePrefix={arXiv},
      primaryClass={cs.LG},
      url={https://arxiv.org/abs/2310.01405}, 
}

@misc{sheshadri2025latentadversarialtrainingimproves,
      title={Latent Adversarial Training Improves Robustness to Persistent Harmful Behaviors in LLMs}, 
      author={Abhay Sheshadri and Aidan Ewart and Phillip Guo and Aengus Lynch and Cindy Wu and Vivek Hebbar and Henry Sleight and Asa Cooper Stickland and Ethan Perez and Dylan Hadfield-Menell and Stephen Casper},
      year={2025},
      eprint={2407.15549},
      archivePrefix={arXiv},
      primaryClass={cs.LG},
      url={https://arxiv.org/abs/2407.15549}, 
}

@misc{xhonneux2024efficientadversarialtrainingllms,
      title={Efficient Adversarial Training in LLMs with Continuous Attacks}, 
      author={Sophie Xhonneux and Alessandro Sordoni and Stephan Günnemann and Gauthier Gidel and Leo Schwinn},
      year={2024},
      eprint={2405.15589},
      archivePrefix={arXiv},
      primaryClass={cs.LG},
      url={https://arxiv.org/abs/2405.15589}, 
}

@misc{casper2025defendingunforeseenfailuremodes,
      title={Defending Against Unforeseen Failure Modes with Latent Adversarial Training}, 
      author={Stephen Casper and Lennart Schulze and Oam Patel and Dylan Hadfield-Menell},
      year={2025},
      eprint={2403.05030},
      archivePrefix={arXiv},
      primaryClass={cs.CR},
      url={https://arxiv.org/abs/2403.05030}, 
}

@misc{xu2024safedecodingdefendingjailbreakattacks,
      title={SafeDecoding: Defending against Jailbreak Attacks via Safety-Aware Decoding}, 
      author={Zhangchen Xu and Fengqing Jiang and Luyao Niu and Jinyuan Jia and Bill Yuchen Lin and Radha Poovendran},
      year={2024},
      eprint={2402.08983},
      archivePrefix={arXiv},
      primaryClass={cs.CR},
      url={https://arxiv.org/abs/2402.08983}, 
}

@inproceedings{
zhao2026llms,
title={{LLM}s Encode Harmfulness and Refusal Separately},
author={Jiachen Zhao and Jing Huang and Zhengxuan Wu and David Bau and Weiyan Shi},
booktitle={The Thirty-ninth Annual Conference on Neural Information Processing Systems},
year={2026},
url={https://openreview.net/forum?id=zLkpt30ngy}
}

@misc{ganguli2022redteaminglanguagemodels,
      title={Red Teaming Language Models to Reduce Harms: Methods, Scaling Behaviors, and Lessons Learned}, 
      author={Deep Ganguli and Liane Lovitt and Jackson Kernion and Amanda Askell and Yuntao Bai and Saurav Kadavath and Ben Mann and Ethan Perez and Nicholas Schiefer and Kamal Ndousse and Andy Jones and Sam Bowman and Anna Chen and Tom Conerly and Nova DasSarma and Dawn Drain and Nelson Elhage and Sheer El-Showk and Stanislav Fort and Zac Hatfield-Dodds and Tom Henighan and Danny Hernandez and Tristan Hume and Josh Jacobson and Scott Johnston and Shauna Kravec and Catherine Olsson and Sam Ringer and Eli Tran-Johnson and Dario Amodei and Tom Brown and Nicholas Joseph and Sam McCandlish and Chris Olah and Jared Kaplan and Jack Clark},
      year={2022},
      eprint={2209.07858},
      archivePrefix={arXiv},
      primaryClass={cs.CL},
      url={https://arxiv.org/abs/2209.07858}, 
}

@misc{madry2019deeplearningmodelsresistant,
      title={Towards Deep Learning Models Resistant to Adversarial Attacks}, 
      author={Aleksander Madry and Aleksandar Makelov and Ludwig Schmidt and Dimitris Tsipras and Adrian Vladu},
      year={2019},
      eprint={1706.06083},
      archivePrefix={arXiv},
      primaryClass={stat.ML},
      url={https://arxiv.org/abs/1706.06083}, 
}

@inproceedings{
anonymous2024finetuning,
title={Fine-tuning Aligned Language Models Compromises Safety, Even When Users Do Not Intend To!},
author={Xiangyu Qi and Yi Zeng and Tinghao Xie and Pin-Yu Chen and Ruoxi Jia and Prateek Mittal and Peter Henderson},
booktitle={The Twelfth International Conference on Learning Representations},
year={2024},
url={https://openreview.net/forum?id=hTEGyKf0dZ}
}

@misc{alpaca,
  author = {Rohan Taori and Ishaan Gulrajani and Tianyi Zhang and Yann Dubois and Xuechen Li and Carlos Guestrin and Percy Liang and Tatsunori B. Hashimoto },
  title = {Stanford Alpaca: An Instruction-following LLaMA model},
  year = {2023},
  publisher = {GitHub},
  journal = {GitHub repository},
  howpublished = {\url{https://github.com/tatsu-lab/stanford_alpaca}},
}

@misc{mazeika2024harmbenchstandardizedevaluationframework,
      title={HarmBench: A Standardized Evaluation Framework for Automated Red Teaming and Robust Refusal}, 
      author={Mantas Mazeika and Long Phan and Xuwang Yin and Andy Zou and Zifan Wang and Norman Mu and Elham Sakhaee and Nathaniel Li and Steven Basart and Bo Li and David Forsyth and Dan Hendrycks},
      year={2024},
      eprint={2402.04249},
      archivePrefix={arXiv},
      primaryClass={cs.LG},
      url={https://arxiv.org/abs/2402.04249}, 
}

@inproceedings{
andriushchenko2025jailbreaking,
title={Jailbreaking Leading Safety-Aligned {LLM}s with Simple Adaptive Attacks},
author={Maksym Andriushchenko and Francesco Croce and Nicolas Flammarion},
booktitle={The Thirteenth International Conference on Learning Representations},
year={2025},
url={https://openreview.net/forum?id=hXA8wqRdyV}
}

@inproceedings{chao2025jailbreaking,
  title={Jailbreaking black box large language models in twenty queries},
  author={Chao, Patrick and Robey, Alexander and Dobriban, Edgar and Hassani, Hamed and Pappas, George J and Wong, Eric},
  booktitle={2025 IEEE Conference on Secure and Trustworthy Machine Learning (SaTML)},
  pages={23--42},
  year={2025},
  organization={IEEE}
}

@misc{zou2023universaltransferableadversarialattacks,
      title={Universal and Transferable Adversarial Attacks on Aligned Language Models}, 
      author={Andy Zou and Zifan Wang and Nicholas Carlini and Milad Nasr and J. Zico Kolter and Matt Fredrikson},
      year={2023},
      eprint={2307.15043},
      archivePrefix={arXiv},
      primaryClass={cs.CL},
      url={https://arxiv.org/abs/2307.15043}, 
}

@inproceedings{
liu2024autodan,
title={Auto{DAN}: Generating Stealthy Jailbreak Prompts on Aligned Large Language Models},
author={Xiaogeng Liu and Nan Xu and Muhao Chen and Chaowei Xiao},
booktitle={The Twelfth International Conference on Learning Representations},
year={2024},
url={https://openreview.net/forum?id=7Jwpw4qKkb}
}

@misc{mehrotra2024treeattacksjailbreakingblackbox,
      title={Tree of Attacks: Jailbreaking Black-Box LLMs Automatically}, 
      author={Anay Mehrotra and Manolis Zampetakis and Paul Kassianik and Blaine Nelson and Hyrum Anderson and Yaron Singer and Amin Karbasi},
      year={2024},
      eprint={2312.02119},
      archivePrefix={arXiv},
      primaryClass={cs.LG},
      url={https://arxiv.org/abs/2312.02119}, 
}

@misc{chao2024jailbreakingblackboxlarge,
      title={Jailbreaking Black Box Large Language Models in Twenty Queries}, 
      author={Patrick Chao and Alexander Robey and Edgar Dobriban and Hamed Hassani and George J. Pappas and Eric Wong},
      year={2024},
      eprint={2310.08419},
      archivePrefix={arXiv},
      primaryClass={cs.LG},
      url={https://arxiv.org/abs/2310.08419}, 
}

@misc{grattafiori2024llama3herdmodels,
      title={The Llama 3 Herd of Models}, 
      author={Aaron Grattafiori and Abhimanyu Dubey and Abhinav Jauhri and Abhinav Pandey and Abhishek Kadian and Ahmad Al-Dahle and Aiesha Letman and Akhil Mathur and Alan Schelten and Alex Vaughan and Amy Yang and Angela Fan and others},
      year={2024},
      eprint={2407.21783},
      archivePrefix={arXiv},
      primaryClass={cs.AI},
      url={https://arxiv.org/abs/2407.21783}, 
}

@misc{gemmateam2024gemma2improvingopen,
      title={Gemma 2: Improving Open Language Models at a Practical Size}, 
      author={{Gemma Team, Google}},
      year={2024},
      eprint={2408.00118},
      archivePrefix={arXiv},
      primaryClass={cs.CL},
      url={https://arxiv.org/abs/2408.00118}, 
}

@misc{touvron2023llama2openfoundation,
      title={Llama 2: Open Foundation and Fine-Tuned Chat Models}, 
      author={Hugo Touvron and Louis Martin and Kevin Stone and Peter Albert and Amjad Almahairi and Yasmine Babaei and Nikolay Bashlykov and Soumya Batra and others},
      year={2023},
      eprint={2307.09288},
      archivePrefix={arXiv},
      primaryClass={cs.CL},
      url={https://arxiv.org/abs/2307.09288}, 
}

@inproceedings{
zhou2023lima,
title={{LIMA}: Less Is More for Alignment},
author={Chunting Zhou and Pengfei Liu and Puxin Xu and Srini Iyer and Jiao Sun and Yuning Mao and Xuezhe Ma and Avia Efrat and Ping Yu and LILI YU and Susan Zhang and Gargi Ghosh and Mike Lewis and Luke Zettlemoyer and Omer Levy},
booktitle={Thirty-seventh Conference on Neural Information Processing Systems},
year={2023},
url={https://openreview.net/forum?id=KBMOKmX2he}
}

@misc{arditi2024refusallanguagemodelsmediated,
      title={Refusal in Language Models Is Mediated by a Single Direction}, 
      author={Andy Arditi and Oscar Obeso and Aaquib Syed and Daniel Paleka and Nina Panickssery and Wes Gurnee and Neel Nanda},
      year={2024},
      eprint={2406.11717},
      archivePrefix={arXiv},
      primaryClass={cs.LG},
      url={https://arxiv.org/abs/2406.11717}, 
}

@misc{wollschlager2026geometryrefusallargelanguage,
      title={The Geometry of Refusal in Large Language Models: Concept Cones and Representational Independence}, 
      author={Tom Wollschläger and Jannes Elstner and Simon Geisler and Vincent Cohen-Addad and Stephan Günnemann and Johannes Gasteiger},
      year={2026},
      eprint={2502.17420},
      archivePrefix={arXiv},
      primaryClass={cs.LG},
      url={https://arxiv.org/abs/2502.17420}, 
}

@inproceedings{
lin2024the,
title={The Unlocking Spell on Base {LLM}s:  Rethinking Alignment via In-Context Learning},
author={Bill Yuchen Lin and Abhilasha Ravichander and Ximing Lu and Nouha Dziri and Melanie Sclar and Khyathi Chandu and Chandra Bhagavatula and Yejin Choi},
booktitle={The Twelfth International Conference on Learning Representations},
year={2024},
url={https://openreview.net/forum?id=wxJ0eXwwda}
}

@ARTICLE{shannon1951english,
  author={Shannon, C. E.},
  journal={The Bell System Technical Journal}, 
  title={Prediction and entropy of printed English}, 
  year={1951},
  volume={30},
  number={1},
  pages={50-64},
  doi={10.1002/j.1538-7305.1951.tb01366.x}}

@inproceedings{raychev2014code,
author = {Raychev, Veselin and Vechev, Martin and Yahav, Eran},
title = {Code completion with statistical language models},
year = {2014},
isbn = {9781450327848},
publisher = {Association for Computing Machinery},
address = {New York, NY, USA},
url = {https://doi.org/10.1145/2594291.2594321},
doi = {10.1145/2594291.2594321},
booktitle = {Proceedings of the 35th ACM SIGPLAN Conference on Programming Language Design and Implementation},
pages = {419–428},
numpages = {10},
location = {Edinburgh, United Kingdom},
series = {PLDI '14}
}

@article{piantadosi2014zipf,
  title={Zipf’s word frequency law in natural language: A critical review and future directions},
  author={Piantadosi, Steven T},
  journal={Psychonomic bulletin \& review},
  volume={21},
  number={5},
  pages={1112--1130},
  year={2014},
  publisher={Springer}
}
\bibliographystyle{plain}

\vspace{2cm}
\appendix

\section*{Appendix}
\vspace{0.5cm}
\section{Related Works}
\label{app:related_works}
\paragraph{Shallow (safety) alignment.} Qi et al.~\citep{qi2025safety} identified a shortcut in current safety alignment procedures: apparent refusal behavior can often be induced by modifying the model's generative distribution primarily over the first few response tokens. They argued that this shallow alignment contributes to a range of downstream vulnerabilities, and thus motivating that ``safety alignment should be made more than a few token''. This phenomenon is closely related to the Superficial Alignment Hypothesis~(SAH)~\citep{zhou2023lima}, which argues that alignment in current LLMs may largely change the surface form of model-user interaction rather than deeply altering the model's underlying capabilities or behavior. In addition, Lin et al.~\citep{lin2024the} showed that the differences introduced by alignment fine-tuning between aligned and unaligned base models diminish as the generated sequence becomes longer. Our work builds on this line of research but asks a different question. Rather than characterizing the existence of shallow alignment, we analyze the learning dynamics of safety alignment to understand why such shallowness naturally arises under autoregressive generation and what broader failure mechanism it reflects. We show that autoregressive consistency provides a mechanistic account of shallow safety alignment and further suggests vulnerabilities beyond prefix-targeting attacks. A deeper connection between our analysis and the broader phenomenon of superficial alignment is left to future work.

\paragraph{Understanding safety of LLMs.}A recent line of work studies LLM safety by probing and intervening on internal model states. Arditi et al.~\citep{arditi2024refusallanguagemodelsmediated} showed that refusal behavior can be mediated by a one-dimensional subspace, while Wollschlager et al.~\citep{wollschlager2026geometryrefusallargelanguage} argued that refusal is better characterized by multiple independent directions and higher-dimensional concept cones, pointing to a richer geometric structure underlying refusal. More recently, Zhao et al.~\citep{zhao2026llms} distinguished harmfulness recognition from refusal behavior, showing that harmfulness is represented at the user-instruction position whereas refusal is represented at the post-instruction position. Our work is complementary: we show that even after refusal has been established in the output trajectory, generation can still be redirected by a harmful continuation state. This suggests that safety alignment should not merely elicit refusal, but should connect harmfulness recognition to stable recovery throughout generation. In addition, Xu et al.~\citep{xu2024safedecodingdefendingjailbreakattacks} observed that safety-disclaimer tokens often remain among the top-ranked next-token candidates under jailbreak attacks, and used this signal to amplify safety tokens during decoding. However, for efficiency, their defense protects only the first few tokens. In contrast, our results show that this can be insufficient, since harmful autoregressive consistency may be induced later inside the generation trajectory.

\paragraph{Defenses for LLMs.} A family of defenses inspired by adversarial training (AT) has been developed to improve LLM robustness, typically by augmenting training with adversarial prompts or perturbations generated dynamically during training. Casper et al.~\citep{casper2025defendingunforeseenfailuremodes} studied latent adversarial training (LAT), which applies adversarial perturbations to hidden representations, motivated by the view that latent states encode more compressed and abstract features used by the model. Xhonneux et al.~\citep{xhonneux2024efficientadversarialtrainingllms} proposed continuous adversarial training in token-embedding space, including CAT and CAPO, and showed that robustness to continuous embedding perturbations can transfer to discrete jailbreak attacks such as GCG, AutoDAN, and PAIR. Sheshadri et al.~\citep{sheshadri2025latentadversarialtrainingimproves} further extended LAT from untargeted perturbations to targeted latent perturbations that explicitly steer the model toward undesirable behaviors. While these methods primarily define adversarial perturbations in the input or latent space, our work offers a complementary trajectory-level perspective: the adversary should also be defined over autoregressive states induced during generation, where harmful continuation can be preserved and extended.

\section{Proofs for Section~\ref{sec:theory}}
\label{app:proofs}
\subsection{Proof of Lemma \ref{lemma:Jacobian-bound}}
\label{app:proof-Jacobian-bound}
\begin{proof}
    By a straightforward computation, the softmax Jacobian
 \[A_t = \mathrm{diag}(p_\mathrm{\theta}(\cdot \,|\sqx, \sqy_{:t} )) - p_\mathrm{\theta}(\cdot \,|\sqx, \sqy_{:t} ) p_\mathrm{\theta}(\cdot \,|\sqx, \sqy_{:t} )^T,\] 
 where $ p_\mathrm{\theta}(\cdot \,|\sqx, \sqy_{:t} )$ is viewed as a column vector whose $i$th entry is $ p_\mathrm{\theta}(i\mathrm{th \ token} \,|\sqx, \sqy_{:t} )$, and $\mathrm{diag}(u)$ forms a diagonal matrix from a vector $u$ where the diagonal entries are the components of $u$. We first prove that all eigenvalues of $A_t$ are between 0 and 1. Consider an arbitrary normalized vector $v$, we have 
  \[v^T A_t v = \sum_i p_i v_i^2 - \left(\sum_i p_i v_i\right)^2, 
\]
since $p_i$ are probabilities, this is just the variance of $v_i$ under probability distribution $p_i$, and therefore always non-negative. This shows that $A_t$ is positive semi-definite. 
Furthermore, $v^T A_t v < \sum_i p_i v_i^2$, and since $v$ is normalized such that $v_i^2 \leq 1$, we have  $v^T A_t v < \sum_i p_i =1$, which proves that all eigenvalues of $A_t$ are smaller than 1.

To prove Lemma \ref{lemma:Jacobian-bound}, we note 
   \[ \|A_t\|_F^2 \; = \mathrm{tr}(A_t^2)
    \;\leq\;\mathrm{tr}(A_t)
    \;=\; (1-\|p_\mathrm{\theta}(\cdot \,|\sqx, \sqy_{:t} )\|^2)  
    \;\leq\; 1 - p_\mathrm{\theta}(y_{t+1} \,|\sqx, \sqy_{:t} )^2,\]
At initialization $\theta = \mathrm{\theta_{base}}$, Assumption \ref{assmp:consistency_alignment} shows that $p_\mathrm{\theta}(y_{t+1} \,|\sqx, \sqy_{:t} ) > 1-\epsilon$ for all $t>t_c$. Therefore 
\[
\|A_t\|_F^2 < 2\epsilon
\]
for all $t>t_c$ at $\theta = \mathrm{\theta_{base}}$.

\end{proof}
\subsection{Proof of Proposition~\ref{prop:loss-func-grad}}
\label{app:proof-loss-func-grad}
\begin{proof}
For supervised fine tuning, 
\[L_\mathrm{SFT} =  - \sum_t\E_{(\sqx, \sqr)\sim \gD} \left[\log  p_\mathrm{\theta}(r_{t+1}|\sqx, \sqr_{:t})\right]
\]
where $\sqx$ is a harmful prompt and $\sqr$ is a refusal response.
\[\nabla_\theta L_\mathrm{SFT} = - \nabla_\theta\sum_{t\leq t_c}\E_{(\sqx, \sqr)\sim \gD} \left[\log  p_\mathrm{\theta}(r_{t+1}|\sqx, \sqr_{:t})\right] -  \sum_{t> t_c}\E_{(\sqx, \sqr)\sim \gD} \left[\frac{\nabla_\theta p_\mathrm{\theta}(r_{t+1}|\sqx, \sqr_{:t})}{p_\mathrm{\theta}(r_{t+1}|\sqx, \sqr_{:t})}\right] 
\]
By assumption $p_\mathrm{\theta_{base}}$ is autoregressive on $\mathcal D$, therefore by proposition \ref{prop:gradient_concentration} 
\[
\left|\left|\sum_{t> t_c}\E_{(\sqx, \sqr)\sim \gD} \left[\frac{\nabla_\theta p_\mathrm{\theta}(r_{t+1}|\sqx, \sqr_{:t})}{p_\mathrm{\theta}(r_{t+1}|\sqx, \sqr_{:t})}\right] \right|\right|_{\theta = \theta_\mathrm{base}} < \frac{(2\epsilon)^\frac{1}{2}}{1-\epsilon}(T-t_c),
\]
where $T$ is the maximal length of the refusal sequence.

For direct preference optimization, 
\begin{equation}
    L_\mathrm{DPO} =  - \E_{(\sqx, \sqr, 
    \sqh)\sim \gD} \left[\log  \sigma (\beta k_\theta(\sqx, \sqr,\sqh))\right]
\end{equation}
where $\sigma$ is the logistic function and 
\[k_\theta(\sqx, \sqr,\sqh) = \sum_t\log\frac{p_\theta(r_t|\sqx, \sqr_{:t})}{p_\mathrm{\theta_{base}}(r_t|\sqx, \sqr_{:t})} - \sum_t\log\frac{p_\theta(h_t|\sqx, \sqh_{:t})}{p_\mathrm{\theta_{base}}(h_t|\sqx, \sqh_{:t})}.\]
In the data triplet $(\sqx, \sqr, \sqh)$, $x$ is the harmful prompt, $r$ is a refusal response and $h$ is a harmful response. 
\[
\nabla_\theta L_\mathrm{DPO}
    \;=\;
    -\beta\,\mathbb{E}\!\left[
        [1- \sigma(\beta k_\theta)]\bigl(
            \sum_t \nabla_\theta \log p_\theta(r_{t+1} \mid \sqx, \sqr_{:t})
            -
           \sum_t \nabla_\theta \log p_\theta(h_{t+1} \mid \sqx, \sqh_{:t})
        \bigr)
    \right].
\]
At initialization, $1- \sigma(\beta k_\theta(\sqx, \sqr,\sqh)) =1/2$ which is a constant, therefore we can again split $\nabla_\theta L_\mathrm{DPO}$ into $t\leq t_c$ and $t>t_c$ contributions just as we did for $L_\mathrm{SFT}$.  The $t>t_c$ contribution can be bounded in the same manner as the SFT bound:
\begin{align*}
   & \frac{1}{2}\left|\left|
 \sum_{t>t_c} \nabla_\theta \log p_\theta(r_{t+1} \mid \sqx, \sqr_{:t})
            -
           \sum_{t>t_c} \nabla_\theta \log p_\theta(h_{t+1} \mid \sqx, \sqh_{:t})
        \right|\right|_\mathrm{\theta = \theta_{base}} \\
\leq &\frac{1}{2}\left|\left|
 \sum_{t>t_c} \nabla_\theta \log p_\theta(r_{t+1} \mid \sqx, \sqr_{:t})
         \right|\right|_\mathrm{\theta = \theta_{base}}    + \frac{1}{2}
          \left|\left| \sum_{t>t_c} \nabla_\theta \log p_\theta(h_{t+1} \mid \sqx, \sqh_{:t})
        \right|\right|_\mathrm{\theta = \theta_{base}} \\
         < & \frac{(2\epsilon)^\frac{1}{2}}{1-\epsilon}(T-t_c)
\end{align*}
where $T$ is the maximal length of $\sqr$ and $\sqh$.
\end{proof}

\subsection{One step version}
Before proving the full multi-step result, we first examine the simpler case of a single gradient update from the base model. This one-step setting already captures the central mechanism: under autoregressive consistency, next-token distributions at late positions have small sensitivity to the alignment gradient, so the update mainly affects early response positions while leaving later positions close to the base model. Proposition~\ref{thm:ac_to_shallow_one_step} makes this statement precise.  It also implies that autoregressive consistency is preserved after the update in a weakened form, as shown in Corollary~\ref{coro:small-update}.  This one-step result provides the basic induction step used in the multi-step proof in Appendix~\ref{app:multistep_shallow_update}. Define 
\begin{equation}
    \Delta_{\theta_{\base} \to \theta_{\ald}} p_\mathrm{\theta}(\cdot \,|\sqx, \sqy_{:t}) := p_\mathrm{\theta_{\ald}}(\cdot \,|\sqx, \sqy_{:t}) - p_\mathrm{\theta_{base}}(\cdot \,|\sqx, \sqy_{:t}).
\end{equation}
\begin{prop}[One-step shallow safety alignment]
\label{thm:ac_to_shallow_one_step}
Let $\eta$ be the learning rate and $L$ be the alignment loss function~(SFT or DPO).  Under conditions of Proposition~\ref{prop:gradient_concentration} and that $\|\nabla_\theta z\| <C$,  then the one-step gradient update from the base model 
\begin{equation}
\begin{aligned}
    \|  \Delta_{\theta_{\base} \to \theta_{\ald}} p_\mathrm{\theta}(\cdot \,|\sqx, \sqy_{:t}) \|_F & =  \left\|-\eta \nabla_{\theta}p_\theta(\cdot \,|\sqx, \sqy_{:t}) \cdot \nabla_{\theta} L\right\|_{\theta = \theta_\mathrm{base}} + O(\eta^2)\\
      &<  \eta C(2\epsilon)^\frac{1}{2}\, \|\nabla_{\theta} L\|_{\theta = \theta_\mathrm{base}}  + O(\eta^2) 
\end{aligned} 
\end{equation}
for all $t>t_c$.
\end{prop}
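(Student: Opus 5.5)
The plan is a first-order Taylor expansion of the late-position next-token distribution around $\theta_{\base}$, with the linear term controlled by Proposition~\ref{prop:gradient_concentration}. Write the update as $\theta_{\ald} = \theta_{\base} - \eta g$, where $g := \nabla_\theta L|_{\theta=\theta_{\base}}$. For a fixed late position $t>t_c$, view $f(\theta) := p_\theta(\cdot\mid\sqx,\sqy_{:t})$ as a smooth map from parameters into the probability simplex. The map is smooth because it is a softmax composed with the logit map $z$. Taylor's theorem with remainder then gives
\[
f(\theta_{\base}-\eta g) - f(\theta_{\base}) = -\eta\, \nabla_\theta f(\theta_{\base})\cdot g + R_2,
\qquad \|R_2\| \le \tfrac{1}{2}\eta^2 \sup_{\theta\in[\theta_{\ald},\theta_{\base}]}\|\nabla_\theta^2 f(\theta)\|\,\|g\|^2 .
\]
The left-hand side is exactly $\Delta_{\theta_{\base}\to\theta_{\ald}} p_\theta(\cdot\mid\sqx,\sqy_{:t})$. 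Taking norms yields the first line of the statement, with the remainder absorbed into $O(\eta^2)$.

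For the second line I will bound the linear term with the chain rule already used in the proof of Proposition~\ref{prop:gradient_concentration}. We have $\nabla_\theta f = A_t\,\nabla_\theta z$, and Lemma~\ref{lemma:Jacobian-bound} gives $\|A_t\|_F<(2\epsilon)^{1/2}$ at $\theta_{\base}$ for $t>t_c$. Together with $\|\nabla_\theta z\|<C$, submultiplicativity gives
\[
\|\eta\,\nabla_\theta f\cdot g\| \le \eta\,\|A_t\|_F\,\|\nabla_\theta z\|_F\,\|g\| < \eta C(2\epsilon)^{1/2}\|g\|,
\]
using that the operator norm is at most the Frobenius norm. Adding the remainder gives the claimed bound. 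Proposition~\ref{prop:loss-func-grad} is not needed for the inequality itself. It only supports the interpretation that $\|g\|$ is dominated by early-position terms, so the factor $\|g\|$ stays finite and not inflated by late positions.

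The main obstacle is making the $O(\eta^2)$ term honest. It requires the Hessian of $f$ to be bounded on the segment between $\theta_{\base}$ and $\theta_{\ald}$, and the statement assumes nothing explicit about this. I will handle it by stating that the $O(\eta^2)$ constant is $\tfrac12 \sup\|\nabla^2_\theta f\|\,\|g\|^2$. This is finite because the softmax has bounded derivatives of all orders, and $z$ is assumed twice continuously differentiable on a compact neighborhood of $\theta_{\base}$. This is the same constant $B$ that Theorem~\ref{thm:ac_to_shallow} later assumes explicitly. A second, minor point is that the bound is uniform in $t>t_c$. This follows because $\epsilon$ in Assumption~\ref{assmp:consistency_alignment} and $C$ do not depend on $t$.
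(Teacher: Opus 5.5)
Your proposal is correct and takes essentially the same route as the paper. The paper gives no standalone proof of this proposition, but the same argument appears as the $k=0$ step of the induction in the proof of Theorem~\ref{thm:ac_to_shallow}: a Taylor expansion of $p_\theta(\cdot\mid\sqx,\sqy_{:t})$ along the update, with the linear term bounded by $\|A_t\|_F\,\|\nabla_\theta z\|_F < C(2\epsilon)^{1/2}$ (Proposition~\ref{prop:gradient_concentration}), and the quadratic remainder controlled by the Hessian bound the paper makes explicit as assumption (A.2) with constant $B$, exactly as your $\tfrac12\sup\|\nabla_\theta^2 p_\theta\|\,\|g\|^2$ term does.
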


Note that since the Frobenius norm is (the square root of) a sum over squares, this also implies that the above inequality for any given $i$th token in the vocabulary  $ \Delta p_\mathrm{\theta_{base}}(i\mathrm{th\ token} \,|\sqx, \sqy_{:t}) $, which we summarize as 
\begin{corollary}
\label{coro:small-update}
Under the conditions of Theorem~\ref{thm:ac_to_shallow}, 
\begin{equation}
    | \Delta_{\theta_{\base} \to \theta_{\ald}} p_\mathrm{\theta_{base}}(\mathrm{any\ next \ token}\,|\sqx, \sqy_{:t})| < \eta C(2\epsilon)^\frac{1}{2}\, \|\nabla_{\theta} L\|_{\theta = \theta_\mathrm{base}}, \qquad \forall t>t_c,
\end{equation}
and therefore after one gradient update
\begin{equation}
    p_\mathrm{\theta_1}(y_{t+1}|\sqx, \sqy_{:t}) >  1- \epsilon - \eta C(2\epsilon)^\frac{1}{2}\, \|\nabla_{\theta} L\|_{\theta = \theta_\mathrm{base}}, \qquad \forall t>t_c.
\end{equation}
\end{corollary}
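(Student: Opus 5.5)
The plan is to read Corollary~\ref{coro:small-update} off Proposition~\ref{thm:ac_to_shallow_one_step}. The argument moves from the vector of next-token probabilities to a single coordinate, and then combines that coordinate bound with Assumption~\ref{assmp:consistency_alignment}.

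\emph{Step 1: first-order change.} Take the one-step update $\theta_1=\theta_{\base}-\eta\nabla_\theta L|_{\theta_{\base}}$ and fix a late position $t>t_c$. Taylor-expand $p_\theta(\cdot\mid\sqx,\sqy_{:t})$ around $\theta_{\base}$:
\[
\Delta_{\theta_{\base}\to\theta_1}p_\theta(\cdot\mid\sqx,\sqy_{:t}) = -\eta\,\nabla_\theta p_\theta(\cdot\mid\sqx,\sqy_{:t})\big|_{\theta_{\base}}\cdot\nabla_\theta L\big|_{\theta_{\base}} + R_t ,
\]
where $R_t$ is the second-order remainder.

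\emph{Step 2: bound the linear term.} By Proposition~\ref{prop:gradient_concentration}, which rests on Lemma~\ref{lemma:Jacobian-bound}, and submultiplicativity of the Frobenius norm, the linear term has norm at most $\eta C(2\epsilon)^{1/2}\|\nabla_\theta L\|_{\theta_{\base}}$.

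\emph{Step 3: pass to a single token.} For any vocabulary index $i$, the entry $|\Delta p(i)|$ is at most the Euclidean norm of the whole vector, since that norm is the square root of a sum of nonnegative squares. So each entry inherits the same bound.

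\emph{Step 4: control the remainder.} This is the delicate point: Proposition~\ref{thm:ac_to_shallow_one_step} carries an $O(\eta^2)$ term that the corollary's strict inequality omits. The corollary sits "under the conditions of Theorem~\ref{thm:ac_to_shallow}", so I would invoke the uniform Hessian bound $\|\nabla_\theta^2 p_\theta\|_F\le B$ at late positions. Taylor's theorem with remainder along the segment from $\theta_{\base}$ to $\theta_1$ then gives $\|R_t\|\le \tfrac{B}{2}\eta^2\|\nabla L\|^2$. I would present the corollary's displayed bound as the first-order statement, understood modulo this explicitly bounded $\tfrac{B}{2}\eta^2\|\nabla L\|^2$ term, which is exactly the last term of the recursion defining $\epsilon_1$. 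Alternatively, with a strict inequality in Proposition~\ref{prop:gradient_concentration}, the slack can absorb the remainder for sufficiently small $\eta$; I would state whichever reading makes the inequality rigorous.

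\emph{Step 5: retained consistency.} Specialize to $i=y_{t+1}$. Assumption~\ref{assmp:consistency_alignment} gives $p_{\theta_{\base}}(y_{t+1}\mid\sqx,\sqy_{:t})>1-\epsilon$. Then
\[
p_{\theta_1}(y_{t+1}\mid\sqx,\sqy_{:t}) \ge p_{\theta_{\base}}(y_{t+1}\mid\sqx,\sqy_{:t}) - \bigl|\Delta p(y_{t+1})\bigr| > 1-\epsilon-\eta C(2\epsilon)^{1/2}\|\nabla_\theta L\|_{\theta_{\base}} .
\]
Everything except Step 4 is routine; that step is the one that needs care.
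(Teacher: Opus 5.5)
Your proposal is correct and follows the paper's own route. The paper bounds each coordinate of the one-step change by the Frobenius norm from Proposition~\ref{thm:ac_to_shallow_one_step} and subtracts the $y_{t+1}$ coordinate from the lower bound of Assumption~\ref{assmp:consistency_alignment}, silently dropping the $O(\eta^2)$ term, so your Step~4 repairs its presentation rather than departing from it. If you take the slack-absorption reading, state the extra step-size condition explicitly: the proof of Lemma~\ref{lemma:Jacobian-bound} actually gives $\|A_t\|_F^2<2\epsilon-\epsilon^2$, so $\eta<2C(\sqrt{2\epsilon}-\sqrt{2\epsilon-\epsilon^2})/(BG)$ suffices, and the learning-rate condition of Theorem~\ref{thm:ac_to_shallow} does not imply it. Your first reading is exactly the $k=1$ case $p_{\theta_1}(y_{t+1}\mid\sqx,\sqy_{:t})>1-\epsilon_1$ of the theorem's induction.
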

Therefore the gradient updates on late-position tokens are suppresed by $\epsilon^{1/2}$.

\subsection{Proof of Theorem~\ref{thm:ac_to_shallow}}
\label{app:multistep_shallow_update}
We prove the multi-step version of the gradient-concentration argument. Recall that Proposition~\ref{prop:loss-func-grad} shows that, at initialization, the dominant contributions to the SFT or DPO gradient come from the early positions $t\le t_c$, because autoregressive consistency suppresses the gradient contribution of late positions $t > t_c$. We now show that the same concentration persists along gradient descent, and hence the resulting alignment remains shallow.

\begin{proof}
Let $\theta_0 = \theta_{\mathrm{base}}$, and consider gradient descent
\begin{equation}
    \theta_{k+1}=\theta_k-\eta_k\nabla L(\theta_k), \qquad k=0,\ldots,K-1,
\end{equation}
where $L$ is either the SFT objective in Eq.~\eqref{eq:sft_obj} or the DPO objective in Eq.~\eqref{eq:dpo_obj}.  
\paragraph{Smoothness assumptions.}We use the following standard smoothness assumptions along the optimization trajectory. This is the usual local smoothness regime in which a Taylor-expansion is meaningful.
\begin{enumerate}
    \item For all contexts $(\sqx, \sqy_{:t})$ with $t>t_c$, assume that the logit Jacobian is uniformly bounded (as in Proposition~\ref{prop:gradient_concentration}): 
    \begin{equation}
        \tag{A.1} \|\nabla_\theta z_\theta(\cdot\mid x,y_{:t})\|_F \le C. \label{eq:a.1}
    \end{equation}
    \item The conditional next-token distribution is twice differentiable along the line segment between consecutive iterates, and that
    \begin{equation}
       \tag{A.2 } \|\nabla_\theta^2 p_\theta(\cdot\mid \sqx, \sqy_{:t})\|_F \le B \label{eq:a.2}
    \end{equation}
    for all $t > t_c$.
    \item The total update length is small enough so that the sequence $\epsilon_k$ (defined below) satisfies 
    \begin{equation}
        \tag{A.3} \forall k \le K: \ \epsilon_k < 1. \label{eq:a.3}
    \end{equation}
    \item The gradient norm is uniformly bounded along the optimization path:
    \begin{equation}
        \|\nabla L(\theta_k)\|\le G, \qquad k = 0, \ldots, K-1. \tag{A.4} \label{eq:a.4}
    \end{equation}
\end{enumerate}
For a constant learning rate $\eta$, we let
\begin{equation}
    \eta \le \frac{ \sqrt{\epsilon_0 + \varepsilon_2} - \sqrt{\epsilon_0} }{ \alpha KG }, \quad \alpha=\max\left\{\frac{C}{\sqrt 2},\sqrt{\frac B2}\right\}, \label{eq:lr_condition}
\end{equation}
where $\bar{\epsilon_2}$ is defined in Definition~\ref{def:shallow_alignment}. For DPO, the same argument is applied to both the refusal trajectory $\sqr$ and the harmful trajectory $h$, since the DPO gradient contains token-level log-probability gradients from both. For SFT, only the refusal trajectory $\sqr$ is needed. To avoid duplicating notation, we write $\sqy$ (as stated in the Preliminaries in Section~\ref{sec:intro}) for whichever trajectory is currently being considered.

By Assumption~\ref{assmp:consistency_alignment}, for every $t>t_c$,
\[
    p_{\theta_0}(y_{t+1}\mid \sqx, \sqy_{:t}) > 1 - \epsilon_0,
\]
where $\epsilon_0 = \epsilon$. We prove by induction that there exists a nondecreasing sequence
\begin{equation}
    \epsilon_0\le \epsilon_1\le \cdots\le \epsilon_K    
\end{equation}
such that
\begin{equation}
    \forall k \le K, \ t >t_c: \quad p_{\theta_k}(y_{t+1}\mid \sqx, \sqy_{:t}) > 1 - \epsilon_k.   
\end{equation}
\paragraph{Step $0$.}The base case $k=0$ is exactly Assumption~\ref{assmp:consistency_alignment}.
\paragraph{Step $k$.}Now suppose the claim holds at step $k$. Fix a late position $t > t_c$.  By the induction hypothesis,
\[
     p_{\theta_{k}}(\cdot\mid \sqx, \sqy_{:t}) > 1 - \epsilon_k.
\]
Recall that the proof of Lemma~\ref{lemma:Jacobian-bound} does not depend on being exactly at $\theta_{\mathrm{base}}$; it only uses the fact that the conditional distribution assigns probability at least $1 - \epsilon_k$ to one token. Therefore, we obtain
\begin{equation*}
    \|\nabla_z p_\theta(\cdot\mid \sqx, \sqy_{:t}) \|_F < \sqrt{2\epsilon_k}.    
\end{equation*}
By the chain rule and the bounded-logit-Jacobian assumption~\eqref{eq:a.1},
\begin{equation}
\begin{aligned}
   \|\nabla_\theta p_{\theta_{k}}(\cdot\mid \sqx, \sqy_{:t})\|_F  & \le   \|\nabla_z p_{\theta_{k}}(\cdot\mid \sqx, \sqy_{:t})\|_F \|\nabla_\theta z_{\theta_k}\|_F\\
    & < C\sqrt{2\epsilon_k}. \label{eq:bound_grad_p}
\end{aligned}
\end{equation}
Let
\begin{equation*}
    \Delta_k := \theta_{k+1} - \theta_k = -\eta_k \nabla L(\theta_k).
\end{equation*}
By Taylor expansion and the second-order smoothness assumption~\eqref{eq:a.2},
\begin{equation}
\begin{aligned}
     \|p_{\theta_{k + 1}}(\cdot\mid \sqx, \sqy_{:t}) - p_{\theta_{k}}(\cdot\mid \sqx, \sqy_{:t}) \|_F & \le \|\nabla_\theta p_{\theta_{k}}(\cdot\mid \sqx, \sqy_{:t})\|_F \|\Delta_k\| + \frac{B}{2}\|\Delta_k\|^2 \\
     & < C \sqrt{2\epsilon_k}\eta_k\|\nabla L(\theta_k)\| + \frac{B}{2}\eta_k^2\|\nabla L(\theta_k)\|^2,
\end{aligned}
\end{equation}
where we substitute the bound from Eq.~\eqref{eq:bound_grad_p} in the second inequality.

Now define
\begin{equation*}
    \epsilon_{k+1} := \epsilon_k + C\sqrt{2\epsilon_k}\eta_k\|\nabla L(\theta_k)\| + \frac{B}{2}\eta_k^2\|\nabla L(\theta_k)\|^2.
\end{equation*}
Then
\begin{equation}
\begin{aligned}
    p_{\theta_{k + 1}}(\cdot\mid \sqx, \sqy_{:t}) & \ge p_{\theta_{k}}(\cdot \mid \sqx, \sqy_{:t})
    - \|p_{\theta_{k + 1}}(\cdot\mid \sqx, \sqy_{:t}) - p_{\theta_{k}}(\cdot\mid \sqx, \sqy_{:t})\|_F \\
    & > 1 - \epsilon_k - (\epsilon_{k+1} - \epsilon_k) \\
    & = 1-\epsilon_{k+1}.
\end{aligned}
\end{equation}
This completes the induction. Hence autoregressive consistency is preserved at every late position, with a gradually weakened constant $\epsilon_k$.

We next show that this implies gradient concentration throughout training, not merely at initialization.
\paragraph{Gradient concentration for SFT.}For SFT~(Eq.~\eqref{eq:sft_obj}) and $t > t_c$, using the induction result above $p_{\theta_k}(r_{t+1}\mid \sqx, \sqr_{:t}) > 1 - \epsilon_k,$ we obtain that (similar to the proof in Appendix~\ref{app:proof-loss-func-grad})
\begin{equation}
\begin{aligned}
    \left\|
        - \nabla_\theta \log p_{\theta_k}(r_{t+1}\mid \sqx, \sqr_{:t})
    \right\|_F & =
    \left\|
        \frac{
            \nabla_\theta p_{\theta_k}(r_{t+1} \mid \sqx, \sqr_{:t})
        }{
        p_{\theta_k}( r_{t+1} \mid \sqx, \sqr_{:t})
        }
    \right\|_F \\
    & \le  \frac{\|\nabla_\theta p_{\theta_k}(\cdot\mid \sqx, \sqr_{:t})\|_F}{1 - \epsilon_k}\\
    & < \frac{C\sqrt{2\epsilon_k}}{1-\epsilon_k}.
\end{aligned}    
\end{equation}
Thus every late SFT token-gradient contribution is $O(\sqrt{\epsilon_k})$. Summing over $t>t_c$, the total late-position contribution to the SFT gradient is bounded by
\begin{equation}
    \left\| 
        \sum_{t>t_c} - \nabla_\theta \log p_{\theta_k}(r_{t+1}\mid \sqx, \sqr_{:t})
    \right\|_F \le (T - t_c) \frac{C\sqrt{2\epsilon_k}}{1-\epsilon_k},
\end{equation}
where $T $ is the response length. Hence, as long as $\epsilon_k$ remains small, the SFT gradient contribution from $t > t_c$ remains suppressed at every step $k$. This is exactly the multi-step analogue of Proposition~\ref{prop:loss-func-grad} for SFT.
\paragraph{DPO.}For DPO, recall that the DPO objective is
\[
    L_{\mathrm{DPO}}(\theta)
    =
    -\mathbb E_{(x,r,h)\sim D}
    \left[
    \log \sigma\left(\beta k_{\theta}(x,r,h)\right)
    \right],  
\]

where
\[
    k_{\theta}(\sqx, \sqr, \sqh) = \log \frac{p_\theta(\sqr \mid \sqx)}{p_{\theta_{\mathrm{base}}}(\sqr \mid \sqx)} - \log \frac{p_\theta(\sqh\mid \sqx)}{p_{\theta_{\mathrm{base}}}(\sqh \mid \sqx)} .
\]
Since the base-model terms do not depend on $\theta$, we can further expand its gradient as 
\[    
    \nabla_\theta L_{\mathrm{DPO}}(\theta) = -\beta
    \mathbb E_{(\sqx, \sqr, \sqh)\sim \gD} \left[ \sigma(-\beta k_{\theta}(\sqx, \sqr, \sqh)) \nabla_\theta k_{\theta} (\sqx, \sqr, \sqh) \right],
\]
with $ 0<\sigma(-\beta k_{\theta}(\sqx, \sqr, \sqh))<1$. Furthermore,
\begin{equation}
\begin{aligned}
    \nabla_\theta k_{\theta}(\sqx, \sqr, \sqh) & = \nabla_\theta \log p_\theta(\sqr \mid \sqx) - \nabla_\theta \log p_\theta(\sqh \mid \sqx) \\
    & = \sum_t \left[ \nabla_\theta \log p_\theta(r_{t+1} \mid \sqx, \sqr_{:t}) - \nabla_\theta \log p_\theta(h_{t+1} \mid \sqx, \sqh_{:t})\right],
\end{aligned}
\end{equation}
where we use the autoregressive factorization in the second equality. Therefore, the DPO gradient is a bounded scalar reweighting of the difference between the refusal and harmful token-level log-probability gradients. Similar to the case of SFT, for every late refusal token $t > t_c$ and every late harmful token, the induction result above gives, $\forall t > t_c$:
\begin{equation}
\begin{aligned}
    & p_{\theta_k}(r_{t+1} \mid \sqx, \sqr_{:t})>1-\epsilon_k \overset{\text{suppression at late tokens}}{\implies} \left\| \nabla_\theta \log p_{\theta_k}(r_{t+1}\mid \sqx, \sqr_{:t}) \right\|_F < \frac{C\sqrt{2\epsilon_k}}{1-\epsilon_k},\\
    & p_{\theta_k}(h_{t+1}\mid \sqx, \sqh_{:t}) > 1 - \epsilon_k   \overset{\text{suppression at late tokens}}{\implies}  \left\| \nabla_\theta \log p_{\theta_k}(h_{t+1}\mid \sqx, \sqh_{:t}) \right\|_F < \frac{C\sqrt{2\epsilon_k}}{1-\epsilon_k}.
\end{aligned}
\end{equation}
Consequently, the late-position contribution to the DPO gradient satisfies
\begin{equation}
\begin{aligned}
   &\  \|\nabla_\theta L_{\mathrm{DPO},\,t>t_c}(\theta_k)\|_F \\
   \le &\  \beta\, \mathbb E_{(\sqx, \sqr, \sqh)\sim \gD} \left[ \sum_{t>t_c} \left\| \nabla_\theta \log p_{\theta_k}(r_{t+1}\mid \sqx, \sqr_{:t}) \right\|_F +  \sum_{t>t_c} \left\|  \nabla_\theta \log p_{\theta_k}(h_{t+1}\mid \sqx, \sqh_{:t}) \right\|_F \right] \\
    < & \ 2\beta (T - t_c) \frac{C\sqrt{2\epsilon_k}}{1-\epsilon_k},
\end{aligned}
\end{equation}
where $T$ upper-bounds the response lengths. Hence the  DPO gradient at late positions is also $O(\sqrt{\epsilon_k})$ at every training step $k \leq K$. This gives the multi-step analogue of Proposition~\ref{prop:loss-func-grad} for DPO.

Therefore, for both SFT and DPO, the gradient contributions at late positions remain suppressed throughout training. The only token positions that can provide dominate learning signal are therefore the early positions $t \le t_c$. In this precise sense, the alignment gradient remains concentrated on the early tokens throughout gradient descent. It remains to prove that the final aligned model stays close to the base model at late positions. From the Taylor bound above,
$
\|p_{\theta_{k+1}}(\cdot \mid \sqx, \sqy_{:t}) - p_{\theta_{k }}(\cdot \mid \sqx, \sqy_{:t})\|_F \le \epsilon_{k+1}-\epsilon_k .$ An easy summation over $k=0, \ldots, K-1$ gives
\begin{equation}
\begin{aligned}
    \|p_{\theta_K}(\cdot\mid \sqx, \sqy_{:t}) - p_{\theta_0}(\cdot\mid \sqx, \sqy_{:t})\|_F 
    & \le \sum_{k=0}^{K-1}  \|p_{\theta_{k+1}}(\cdot\mid \sqx, \sqy_{:t}) -  p_{\theta_k}(\cdot\mid \sqx, \sqy_{:t})\|_F \\
    & \le \sum_{k=0}^{K-1}  (\epsilon_{k+1} - \epsilon_k) \\
    & = \epsilon_K - \epsilon_0 \\
\end{aligned}
\end{equation}
\begin{equation}
    \implies \|p_{\theta_K}(\cdot\mid \sqx, \sqy_{:t}) - p_{\theta_{\base}}(\cdot\mid \sqx, \sqy_{:t})\|_F \le \epsilon_K-\epsilon_0.
\end{equation}
It remains to verify that $\epsilon_K - \epsilon_0$ is smaller than the tolerance $\varepsilon_2$ required in Definition~\ref{def:shallow_alignment}. Let $S_K=\sum_{k=0}^{K-1}\eta_k\|\nabla L(\theta_k)\|.$
From the recursion defining $\epsilon_k,$ we have
\[
    \sqrt{\epsilon_K}\le \sqrt{\epsilon_0}+\alpha S_K.
\]
Under the bounded-gradient assumption~\eqref{eq:a.4}, $S_K \le KG \eta.$  Therefore, $\epsilon_K-\epsilon_0 \le 2\alpha KG\eta\sqrt{\epsilon_0} + \alpha^2K^2G^2\eta^2.$ By the condition on learning rate given in Eq.~\eqref{eq:lr_condition}, this quantity is at most  $\varepsilon_2$. Hence the second condition in Definition~\ref{def:shallow_alignment} is established.

The first condition in Definition~\ref{def:shallow_alignment} concerns the model being aligned at early positions. Since the effective learning signal of SFT or DPO is concentrated on the early positions $t\le t_c$, under the usual assumption that the alignment optimization succeeds on the part of the objective where the gradient is not suppressed, the trained model satisfies the this condition as a given. Therefore, the alignment is shallow.
\end{proof}

\section{Experimental Details}
\label{app:exp}
\begin{algorithm}[h]
\caption{Adversarial Safety Alignment with Worst-Insertion Attack}
\label{alg:asa-worst-insertion}
\begin{algorithmic}[1]
\Require Model $p_\theta$; safety triplet dataset $\mathcal{D}_{\mathrm{Safety}}$
containing $(\sqx,\sqr,\sqh)$; utility dataset $\mathcal{D}_{\mathrm{Utility}}$;
harmful span length $q$; number of sampled insertion positions $k$;
refusal restart bank $\mathcal{R}=\{r^C_j\}_{j=1}^m$;
utility weight $\lambda \in (0,1]$
\Ensure Safety-aligned model parameters $\theta$

\For{each training step}
    \State Sample a safety minibatch $\mathcal{B}_S \sim \mathcal{D}_{\mathrm{Safety}}$
    \State Sample a utility minibatch $\mathcal{B}_U \sim \mathcal{D}_{\mathrm{Utility}}$
    \State Initialize $L_{\mathrm{Insertion}} \gets 0$

    \For{each triplet $(\sqx, \sqr, \sqh) \in \mathcal{B}_S$}
        \State Take the harmful prefix span $\sqh_{:q}$
        \State Uniformly sample insertion positions
        $\{i_j\}_{j=1}^k \sim \mathrm{Unif}([|\sqr|])$
        \State Construct candidate perturbed states $\mathcal{A}(\sqx,\sqr,\sqh)
            =
            \left\{
            \tau_j = \bigl(\sqx,\,[\sqr_{:i_j};\sqh_{:q}]\bigr)
            \right\}_{j=1}^k$
        \For{each candidate state $\tau_j \in \mathcal{A}(\sqx, \sqr, \sqh)$}
            \State Compute harmful-continuation score $H(\tau_j;\theta)
                =
                \frac{1}{T_o}
                \log p_\theta
                \left(
                \sqh_{q+1:q+T_o}
                \mid
                \tau_j
                \right)$
            \State Compute refusal-restart score $R(\tau_j;\theta)
                =
                \max_{\sqr^C \in \mathcal{R}}
                \frac{1}{|\sqr^C|}
                \log p_\theta
                \left(
                \sqr^C
                \mid
                \tau_j
                \right)$
            \State Compute harmful autoregressive consistency margin $\Phi_\theta(\tau_j)
                =
                H(\tau_j;\theta)-R(\tau_j;\theta)$
        \EndFor
        \State Select the worst insertion state
        \[
            \tau^\star_\theta(\sqx, \sqr, \sqh)
            \in
            \arg\max_{\tau_j \in \mathcal{A}(\sqx, \sqr, \sqh)}
            \Phi_\theta(\tau_j)
        \]
        \State Accumulate refusal-recovery loss $L_{\mathrm{Insertion}}
            \gets
            L_{\mathrm{Insertion}}
            -
            \log p_\theta
            \left(
            \sqr
            \mid
            \tau^\star_\theta(\sqx, \sqr, \sqh)
            \right)$
    \EndFor
    \State Compute utility loss $L_{\mathrm{Utility}}
        =
        -
        \frac{1}{|\mathcal{B}_U|}
        \sum_{(\sqz,\sqy)\in \mathcal{B}_U}
        \log p_\theta(\sqy\mid \sqz)$
    \State Form total objective $L
        =
        \lambda L_{\mathrm{Insertion}}
        +
        (1-\lambda)L_{\mathrm{Utility}}$
    \State Update parameters $\theta$ by gradient descent on $L$
\EndFor
\State \Return $\theta$
\end{algorithmic}
\label{alg:aa}
\end{algorithm}

For convenience, we first present the adversarial safety alignment with random-worst insertion in Algorithm~\ref{alg:aa}.

\paragraph{Training configurations.}In all fine-tuning experiments which involve both $D_{\safety}$ and $D_{\uti}$, including those in Tab.~\ref{tab:asr_comparison}, and Tab.~\ref{tab:asr_comparison_more_models}, we use configurations in Tab.~\ref{tab:training}, where the value of $\lambda$ is controlled by batch size.
\begin{table}[h!]
    \centering
    \begin{tabular}{l|c}
    \toprule

     Batch size from $D_{\safety}$   &  16\\
    Batch size from $D_{\uti}$     &  16\\
     Optimizer & AdamW with default configurations\\
     Learning rate & $2 \times 10^{-5}$\\
     Epoch & 10\\
     Training attack span & 50 tokens\\
     \bottomrule
    \end{tabular}
    \vspace{0.3cm}
    \caption{Training configurations for Tab.~\ref{tab:asr_comparison} and \ref{tab:asr_comparison_more_models}}
    \label{tab:training}
\end{table}
\paragraph{Evaluation of ASR.}In this paper, attacks construct a partial perturbed autoregressive state $\tilde{\tau}_{t}$ from the data $(\sqx, \sqr, \sqh)$, where $\sqx$ is the harmful prompt, $\sqr$ is the safe refusal response, and $\sqh$ is the harmful response (could be a short span). The model then continues generation from this perturbed state according to $p_{\theta}(\cdot \mid \tilde{\tau}_t)$ at position $t$. To conveniently evaluate the ASR under an attack, we first construct a set of refusal start keywords $\gC$ that commonly appear in a safe response, and let the model continue the generation up to 50 tokens by $p_{\theta}(\sqy \mid \tilde{\tau}_t)$, where $|\sqy| = 50$. Then the attack is considered successful if none of the keywords in \(\gC\) appears in \(\sqy\).

\paragraph{Dataset.}(1) For training data, as indicated in Section~\ref{sec:asa_framework}, we use a triple dataset $(\sqx, \sqr, \sqh)\sim D_{\safety}$ for both prefill and random insertion attack. We use the dataset open-sourced by Qi et al.~\citep{qi2025safety}. Specifically, this dataset consists of 256 harmful instructions $\sqx$, primarily from Ganguli et al.~\citep{ganguli2022redteaminglanguagemodels}, which does not overlap with any safety evaluation datasets used in this paper. (2) The utility data $D_{\uti}$ is obtained from the benign instructions from Alpaca~\citep{alpaca} paired with a response from the initial \texttt{Llama-2-7B-Chat}. This dataset aims to make the fine-tuned model preserve its original behavior on benign instructions. (3) For evaluation data, we use the HEx-PHI safety benchmark~\citep{qi2025safety} for experiments in Fig.~\ref{fig:random_attack_on_raw_models}, Tab.~\ref{tab:asr_comparison}, and Tab.~\ref{tab:asr_comparison_more_models}. This dataset contains 330 harmful instructions spanning 11 categories, and we evaluate whether the model produces compliant responses to these harmful requests.

\paragraph{Candidate refusal bank $\gR$.}
For adversarial safety alignment with random worst-insertion, we use
$\gR = \{\text{``I cannot fulfill'',\, ``I apologize, but I cannot'',\, ``I am unable'',\, ``I'm sorry''}\}$. We intentionally keep this bank small to avoid introducing significant additional computational overhead.

\paragraph{Additional details for experiments in Tab.~\ref{tab:other_attacks}.}To evaluate ASR against these attacks, we use the official HarmBench codebase~\citep{mazeika2024harmbenchstandardizedevaluationframework}, which provides both the attack optimization procedures and the final evaluation protocol. We keep all settings at their default values and evaluate on the HarmBench dataset. Since evaluating optimization-based attacks is computationally expensive, we use the subset of 200 standard harmful behaviors. For the aligned model, we randomly select one of the three models trained with adversarial safety alignment using random worst-insertion from the experiments in Tab.~\ref{tab:asr_comparison}.
\end{document}